\newcommand{\R}{\mathbb{R}}
\newcommand{\tR}{\text{R}}
\newcommand{\cH}{\mathcal{H}}
\newcommand{\cX}{\mathcal{X}}
\newcommand{\cY}{\mathcal{Y}}
\newcommand{\cC}{\mathcal{C}}
\newcommand{\cB}{\mathcal{B}}
\newcommand{\cT}{\mathcal{T}}
\newcommand{\indic}[1]{\mathbb{I}\left[#1\right]}
\newcommand{\vu}{\mathbf{u}}
\newcommand{\vv}{\mathbf{v}}
\newcommand{\vw}{\mathbf{w}}
\newcommand{\idx}[1]{\textsf{index}(#1)}
\newlength{\dhatheight}
\newcommand{\vc}{{\rm VC}}
\renewcommand{\dim}{d} % to be introduced as short-hand for VC dimension
\newcommand{\Alg}{\mathcal{A}}
\renewcommand{\H}{\mathcal H} % hypothesis space
\DeclareSymbolFont{bbold}{U}{bbold}{m}{n}
\DeclareSymbolFontAlphabet{\mathbbold}{bbold}
\newcommand{\ind}{\mathbbold{1}}
\newcommand{\nats}{\mathbb{N}} % natural numbers
\newcommand{\E}{\mathbb{E}}
\newcommand{\DataX}{\mathbb{X}}
\newcommand{\DataY}{\mathbb{Y}}
\newcommand{\argmax}{\mathop{\rm argmax}}
\newcommand{\argmin}{\mathop{\rm argmin}}
\newcommand{\ignore}[1]{}
\newcommand{\oldstuff}[1]{}
\newtheorem{theorem}{Theorem}
\newtheorem{lemma}[theorem]{Lemma}
\newtheorem{definition}[theorem]{Definition}
\newtheorem{notation}[theorem]{Notation}
\newtheorem{example}{Example}
\newtheorem{condition}{Condition}
\newenvironment{prfsk}{%
  \proof}{\endproof}
\newsavebox{\savepar}
\newcommand{\vast}{\bBigg@{3}}
\newcommand{\Vast}{\bBigg@{4}}
\title{A Theory of Optimistically Universal Online Learnability for General Concept Classes}
\author{%
 Steve Hanneke\\
 Department of Computer Science\\
 Purdue University\\
 West Lafayette, IN 47907\\
 \texttt{steve.hanneke@gmail.com}
 \And
 Hongao Wang\\
 Department of Computer Science\\
 Purdue University\\
 West Lafayette, IN 47907\\
 \texttt{wang5270@purdue.edu}
}
\begin{document}

\maketitle

\begin{abstract}
    We provide a full characterization of the concept classes that are optimistically universally online learnable with $\{0,1\}$ labels. The notion of optimistically universal online learning was defined in~\citep{hanneke:21a} in order to understand learnability under minimal assumptions. In this paper, following the philosophy behind that work, we investigate two questions, namely, for every concept class:
    (1) What are the minimal assumptions on the data process admitting online learnability? (2) Is there a learning algorithm which succeeds under every data process satisfying the minimal assumptions?
    Such an algorithm is said to be optimistically universal for the given concept class.
    We resolve both of these questions for all concept classes, and moreover, as part of our solution we design general learning algorithms for each case. Finally, we extend these algorithms and results to the agnostic case, showing an equivalence between the minimal assumptions on the data process for learnability in the agnostic and realizable cases, for every concept class, as well as the equivalence of optimistically universal learnability.
\end{abstract}
%\textcolor{red}{Strongly Universal or Weakly?}
\section{Introduction}

%Online learning model:
Just as computability is a core question in computation theory, learnability is now a core question in learning theory. Intuitively, learnability is trying to ask whether we can predict the future correctly with high probability by observing enough examples. 
%That is one of the most basic and important ways that humans get their knowledge and the core problem we should ask about the machines when we are trying to develop artificial intelligence. %However, this description of learnability is a little vague and we cannot start any meaningful discussion based on it. Thus, we should introduce the learning model to formalize this ability and find a way to measure it, which is similar to the Turing machine in the scope of computability.
In order to describe this intuition formally, we need to define learning models, such as, Probably Approximately Correct (PAC) learning (\citet{vapnik:74} and ~\citet{valiant:84}), online learning (\citet{littlestone:86}) and query learning (\citet{angluin:88a}). In this paper, we focus on a variant of online learning: online learning under data processes. In this setting, the learning is sequential: in round $t$, an instance $X_t$ is given to the algorithm, and then the algorithm makes the prediction, $\hat{Y}_t$, based on the history $(X_{\leq t-1}, Y_{\leq t-1})$ and the input $X_t$, i.e., $\hat{Y}_t = f_t(X_{\leq t-1}, Y_{\leq t-1}, X_t)$. Next, the target value $Y_t$ will be revealed to the learner such that it can be used to inform future predictions. 
We model this sequence as a general \emph{stochastic process} $(\DataX,\DataY) = \{(X_t,Y_t)\}_{t \in \mathbb{N}}$ (possibly with dependencies across times $t$).
We say that the algorithm is strongly consistent under $(\DataX,\DataY)$ if the long-run average error is guaranteed to be low, i.e.,  $\frac{1}{n} \sum_{t=1}^{n} \indic{\hat{Y}_t\neq Y_t} \to 0$ almost surely, when $n \to \infty$. 

%which is a collection of random variables defined on a measurable space $(\cX,\Sigma)$, with probability space $(\Omega, F, P)$, with index set $T$. And the target value $y_t$ is sampled from the random process $\DataY = \{Y_t\}_{t \in T}$, which is a collection of random variables defined on $(\cY, \Sigma)$, Here $\cY = \{0,1\}$ is the label space. The learning is sequential: in each round, a $x_t$ is given to the algorithm, then the algorithm makes the prediction, $\hat{y_t}$, based on the history $(X_{\leq t-1}, Y_{\leq t-1})$ and the input $X_t$, i.e., $\hat{y_t} = f_t(X_{\leq t-1}, Y_{\leq t-1}, X_t)$. Next, the target value $Y_t$ will be revealed to the learner such that it can be used to inform the future prediction.
%show the correctness of the prediction, and then the algorithm may update its strategy after each reveal to make progress in learning the pattern of this task.
In our setting, any theory of learning must be expressed based on the properties of, or restriction on, the data process, as the mistake bound is based on the data process. Thus, following an approach found in much of the learning theory literature, such as the PAC model of \cite{valiant:84} and \cite{vapnik:71} and the online learning model of \cite{littlestone:88}, we introduce the restriction by an additional component, concept class $\cH\subseteq \cY^\cX$. The role of the concept class is to restrict the processes we need to face, such that they all are realizable under that concept class. 
%A concept class $\cH \subseteq \cY^\cX$ is a class of functions. The role of concept class in the learning theory , that are unknown to the learner and used to generate the target label for the data processes. This is a conventional and necessary assumption in related literature, such as ~\citet{BHMvY21, alon:21}. 
If there is a target function $h^*\in \cH$, such that $Y_t = h^*(X_t)$ for every $t$, we say the data process $(\DataX,\DataY)$ is realizable (though our formal definition below is slightly more general). For a given $\DataX$, if a learning rule is strongly consistent under $(\DataX,\DataY)$ for \emph{every} $\DataY$ such that $(\DataX,\DataY)$ is realizable, we say it is \emph{universally consistent} under $\DataX$ in the realizable case. 

%introduce the concept class, concept function,(history, motivation, etc.) and then the universal consistency.

It is known that we cannot get the low average error guarantee for all concept classes and data processes~\citep{hanneke:21a}. Thus, we should make several restrictions on either the data process, the concept class, or a mixture of both. All three types of restrictions have been investigated: \citet{littlestone:88, BPS09} studied online learning with unrestricted data processes but restricted concept classes. \citet{haussler:94, ryabko06a} researched the online learning problem with a mix of both restrictions. There are also substantial amount of papers investigating online learnability with all measurable functions but restricted data processes. % usually called \emph{universal online learnability}, 
Most of these specifically consider the case of i.i.d.\ processes, such as~\citet{stone:77, devroye:96},
though this has also been generalized to general stationary ergodic processes \citep{Morvai96, GLM99} or processes with certain convergence properties enabling laws of large numbers \citep{morvai99, steinwart09}. %Here the term \emph{universal} means the property holds for \emph{every} realizable distribution but not uniformly over all distributions.

More recently, a general question has been studied: In the case of $\cH$ equals the set of all measurable functions, is there a learning rule guaranteed to be universally consistent given only the assumption on $\DataX$ that universally consistent learning is possible under $\DataX$? 
The assumption that universal consistency is possible under $\DataX$ is referred to as the ``optimist's assumption''
\citep{hanneke:21a},
and for this reason, learning rules which are universally consistent for all $\DataX$ satisfying the optimist's assumption are said to be 
\emph{optimistically universally consistent}.
There is a series of works focusing on this question, starting from \citet{hanneke:21a} and continuing with \citet{blanchard22a, blanchard22b, BCH22}. They tackle this question by first characterizing the minimal assumptions on the data process admitting universally consistent learning and then proposing an online learning algorithm that is universally consistent for all data processes satisfying that assumption. However, their works all focus on the situation with no restrictions on concepts in the concept class (i.e., $\cH$ as all measurable functions). Thus, a natural question arises: 
For which concept classes do there exist optimistically universal learning rules?
%for any general concept classes? %In order to  This question is still under the weakest assumptions setting but asks for the trade-off between the assumptions on data processes and concept classes. %To answer this question, we will need to answer another question as well: What is the sufficient and necessary condition for the processes to admit universal online learning under concept class $\cH$?

In this paper, we investigate the question mentioned above when the output is binary, i.e. $\{0,1\}$. We handle this problem by first figuring out the minimal assumptions on the data process admitting consistent online learning as well. Thus, our results answered that question in the following aspects:
\begin{itemize}
    \item For which concept classes is optimistically universally consistent learning possible?
%if not all processes admit universally consistent online learning under that concept class?
    \item What are the sufficient and necessary conditions for processes to admit universally consistent online learning for a given concept class $\cH$? %if not all processes admit? 
    \item For which concept classes is it the case that \emph{all} processes $\DataX$ admit universally consistent online learning? 
    %Is it possible to have an optimistically universally consistent learner for those concept classes?
\end{itemize}
We first answer these questions in the realizable case. Surprisingly, the answers turn out to be intimately related to combinatorial structures arising in the work of \citet{BHMvY21} on universal learning rates. This suggests a potential connection between the universal consistency of online learning and universal learning rates, which is of independent interest. We also extend our learning algorithms for the realizable case to the agnostic setting, where the requirement of low average loss is replaced by that of having sublinear regret. Interestingly, our answers to the above questions remain unchanged in the agnostic case, establishing an equivalence between agnostic and realizable learning in this setting.

In this paper, we first provide some interesting examples in section~\ref{sec:example}. Then section~\ref{sec:all} investigates question three and question one for those classes and section~\ref{sec:infinite} answers question two and question one for remaining classes. Finally, in section~\ref{sec:agnostic}, we extend our algorithm to the agnostic case.
\subsection{More Related Work}
Starting from Littlestone's ground-breaking work \citep{littlestone:88}, online learning is becoming more and more important. In this paper, \citet{littlestone:88} introduces a combinatorial parameter of the concept class, known as Littlestone dimension, to characterize the online learnable concept classes in the realizable case. After that, \citet{BPS09} figure out Littlestone dimension is still the property to characterize online learnability in the agnostic setting. They extend an online learning algorithm for the realizable case to such an algorithm for the agnostic case using the weighted majority algorithm from \citet{LW94}. This line of work makes no assumption on the data process and investigates how restrictions on the concept affect the learnability of the problem. There are two other categories of assumptions also investigated in history: one is to make assumptions on both the data process and the concept, and the other is to make assumptions on the data process but not the concept. Those two categories are discussed in detail subsequently.

First, the works investigating the question of learnability with restrictions on both the data process and the concept make a variety of assumptions. For example, \citet{haussler:94} investigate how the restrictions on the concept will affect learnability given that the data process is i.i.d. This problem is more similar to a streamlined version of PAC learning and they show that there is a logarithmic mistake bound with the assumption that the data process is i.i.d. and the concept class has finite VC dimension. \citet{AN10} reveal that all stationary ergodic sequences will uniformly converge under the concept class with finite VC dimension. However, they cannot show the convergence rate of that learning algorithm. Many other works focus on revealing the rate with slightly stronger assumptions on the sequences, such as, \citep{yu94, KV02}.

Another line of works focuses on the question of learnability with restrictions on the process instead of the concept, starting from the theory of universally consistent predictors under i.i.d sequences. In particular, there exists an online learning rule, such that for any i.i.d. sequence $(\DataX,\DataY)$, and every measurable function $f^*$, the long-run average loss is $0$ almost surely, such as, \citep{stone:77,devroye:96, HKSW21}. In the meanwhile, people are also interested in the consistency under non-i.i.d. processes; \citet{GLM99, Morvai96} reveal that there are universally consistent online learning rules under general stationary ergodic processes. The paper of \citet{morvai99} and the paper of \citet{steinwart09} show universal consistency under some classes of processes satisfying laws of large numbers.

More recently, the work of \citet{hanneke:21a} investigates whether there is a consistent learning rule given only the assumptions on $\DataX$ that universally consistent learning is possible. This work generalizes the assumptions on $\DataX$ made by the previous works on universal consistency. The assumption that universally consistent learning is possible is known as the ``optimist's assumption'', so the consistency under that assumption is called \emph{optimistically universal} consistency. \citet{hanneke:21a} studies three different learning models: inductive, self-adaptive, and online, and proves that there is an optimistically universal self-adaptive learning rule and no optimistically universal inductive learning rule. After this beginning, the works of \citet{blanchard22a,BCH22,blanchard22b} show that optimistically universal online learning is possible and the processes that admit strongly universal online learning satisfy the condition called $\cC_2$ (see condition~\ref{cond:c2} for reference). This problem is also investigated under different models, such as, in contextual bandit setting~\citet{BHJ22, BHJ23} and general noisy labels~\citet{BJ23}.

\section{Preliminaries and Main Results}
In this section, we provide formal definitions and model settings and briefly list the main results of this paper without proof. For brevity, we provide the high-level proof sketch in the subsequent sections and proof details are in the appendices.
\vspace{-0.3cm}
\paragraph{Model Setting} We formally provide the learning model here. Let $(\cX,\cB)$ be a measurable space, in which $\cX$ is assumed to be non-empty and $\cB$ is a Borel $\sigma$-algebra generated by a separable metrizable topology $\cT$. We also define a space $\cY = \{0,1\}$ called \emph{label space}. Here we focus on learning under the $0$-$1$ loss: that is, $(y,y') \mapsto \indic{y \neq y'}$ defined on $\cY \times \cY$, 
%and here we are focusing on the $\{0,1\}$-loss functions defined on $\cY\times\cY'$,  $\indic{\hat{y}\neq y}$, 
where $\indic{\cdot}$ is the indicator function. A stochastic process $\DataX = \{X_t\}_{t\in \nats}$ is a sequence of $\cX$-valued random variables. A stochastic process $\DataY = \{Y_t\}_{t \in \nats}$ is a sequence of $\{0,1\}$-valued random variable. The concept class $\cH$, which is a non-empty set of measurable functions $h: \cX \rightarrow \cY$.\footnote{We additionally make standard restrictions on $\cH$ to ensure certain estimators are well-behaved; we omit the details for brevity, but refer the reader to \citet{BHMvY21} for a thorough treatment of measurability of these estimators.}
%We do not dive into the details, but we do rely on the basic assumptions on the measurability of the functions, see \citet{BHMvY21} for reference. }

The online learning rule is a sequence of measurable functions: $f_t: \cX^{t-1}\times\cY^{t-1}\times\cX \rightarrow \cY$, where $t$ is a non-negative integer. For convenience, we also define $\hat{h}_{t-1} = f_{t}(X_{<t},Y_{<t})$, here $(X_{<t},Y_{<t}) = \{(X_i,Y_i)\}_{i<t}$ is the history before round $t$. %Intuitively, $\hat{h}_t$ is the prediction function(???) after observing $t$ data points. 

%\textcolor{red}{there are two types of definitions on realizable cases they are quite equivalent in iid cases. but there is differences when we talking about universal rates. In that case, the following definition is preferred to avoid triviality.}
There are two ways to define the realizable case: The most common one is
%we usually define the limiting average loss of the online learning rule based on the assumption 
that there exists $h^* \in \cH$ such that $Y_t = h^*(X_t)$. The other is the definition~\ref{def:realizable} on the realizable data process, which comes from the universal learning setting. These two definitions are equivalent in the uniform PAC learning with i.i.d. samples. However, they are different when talking about universal learning. Thus, we follow the definition from the universal learning setting. 
%The realizable data processes are defined formally as:
\begin{definition}
\label{def:realizable}
    For every concept class $\cH$, we can define the following set of processes $\tR(\cH)$:
    \begin{equation*}
        \tR(\cH) := \left\{ (\DataX,\DataY) = \left\{(X_i,Y_i)\right\}_{i\in \nats}: \text{with probability }1, \forall n < \infty, \left\{(X_i,Y_i)\right\}_{i\leq n} \text{ realizable by }\cH\right\}.
    \end{equation*}
\end{definition}
%Notice that for any $h^* \in \cH$,  and also , belongs to $\tR(\cH)$. This contains both of the cases we want to contain and avoid the error case made by the simple definition. 
In the same way, the set of realizable label processes:
\begin{definition}
    For every concept class $\cH$ and data process $\DataX$, define a set $\tR(\cH,\DataX)$ of label processes:
        \begin{equation*}
        \tR(\cH,\DataX) := \left\{ \DataY = \left\{Y_i\right\}_{i\in \nats}: (\DataX,\DataY) \in \tR(\cH) \text{ and } \exists \text{ a non-random function }f \text{ s.t.\ } Y_i = f(X_i) \right\}.
        \end{equation*}
\end{definition}
    In other words, $\tR(\cH,\DataX)$ are label processes $\DataY = f(\DataX)$ s.t.\ $(\DataX,f(\DataX)) \in \tR(\cH)$.
    Importantly, while every $f \in \cH$ satisfies $f(\DataX) \in \tR(\H,\DataX)$, there can exist $f \notin \cH$ for which this is also true, due to $\tR(\cH)$ only requiring realizable \emph{prefixes} (thus, in a sense, $\tR(\cH,\DataX)$ represents label sequences by functions in a \emph{closure} of $\cH$ defined by $\DataX$).\footnote{For instance, for $\cX=\nats$, for the process $X_i=i$, and for $\cH = \{ \ind_{\{i\}} : i \in \cX \}$ (singletons), the all-$0$ sequence is in $\tR(\cH,\DataX)$ though the all-$0$ function is not in $\cH$.}

At first, we define the universal consistency under $\DataX$ and $\cH$ in the realizable case. An online learning rule is universally consistent under $\DataX$ and $\cH$ if its long-run average loss approaches $0$ almost surely when the number of rounds $n$ goes to infinity for all realizable label processes. Formally, we have the following definition:
\begin{definition}
An online learning rule is \emph{strongly universally consistent} under $\DataX$ and $\cH$ for the realizable case, if for \emph{every} $\DataY \in \tR(\cH,\DataX)$, $\limsup_{n\rightarrow \infty}\frac{1}{n} \sum_{t = 1}^n \indic{Y_t \neq \hat{h}_{t-1}(X_t)} = 0$ a.s.
%\begin{equation*}
%   \lim_{n\rightarrow \infty}\frac{1}{n} \sum_{t = 1}^T \indic{Y_t \neq h_{t-1}(X_t)} = 0(a.s.).
%\end{equation*}
\end{definition}
We also define the universal consistency under $\DataX$ and $\cH$ for the agnostic case. In that definition, we release the restrictions that $\DataY \in \tR(\cH,\DataX)$, instead the label process $\DataY$ can be set in any possible way, even dependent on the history of the algorithm's predictions. Thus, the average loss may be linear and inappropriate for defining consistency. Therefore, we compare the performance of our algorithm with the performance of the best possible $\DataY^* \in \tR(\cH,\DataX)$, which is usually referred to as \emph{regret}. We say an online algorithm is universally consistent under $\DataX$ and $\cH$ for the agnostic case if its long-run average regret is low for every label process. Formally, 
\begin{definition}
    An online learning rule is \emph{strongly universally consistent} under $\DataX$ and $\cH$ for the agnostic case, if for \emph{every} $\DataY^* \in \tR(\cH,\DataX)$ and for \emph{every} $\DataY$, $\limsup_{n\rightarrow \infty}\frac{1}{n} \sum_{t = 1}^n \left(\indic{Y_t \neq \hat{h}_{t-1}(X_t)} - \indic{Y_t \neq Y^*_t}\right) \leq 0$ a.s. 
    %We may also say $\sum_{t = 1}^n \left(\indic{Y_t \neq \hat{h}_{t-1}(X_t)} - \indic{Y_t \neq Y^*_t}\right)$ is the regret compared to $\DataY^*$.
\end{definition}
To describe the assumption that universal consistency is possible under the data process $\DataX$ and the concept class $\cH$, we need to define the process admitting universal online learning as follows:
\begin{definition}
    We say a process $\DataX$ admits \emph{strongly universal online learning} (or just \emph{universal online learning} for convenience) if there exists an online learning rule that is strongly universally consistent under $\DataX$ and $\cH$.
\end{definition}

If the online learning rule is universally consistent under every process that admits universal online learning, we call it \textbf{optimistically universal} under the concept class. 
If there is an optimistically universal learning algorithm under that concept class, we say that concept class is optimistically universally online learnable. 
The formal definition is provided below:
\begin{definition}
    An online learning rule is \emph{optimistically universal} under concept class $\cH$ if it is strongly universally consistent under every process $\DataX$ that admits strongly universally consistent online learning under concept class $\cH$.

    If there is an online learning rule that is \emph{optimistically universal} under concept class $\cH$, we say $\cH$ is \emph{optimistically universally online learnable}. 
\end{definition}

Next, we define the combinatorial structures we use to 
%solve our questions: the Littlestone tree, and the VCL (Vapnik-Chervonenkis-Littlestone) tree. We use those structures to 
characterize the concept class that makes all processes admit universal online learning and is optimistically universally online learnable when all processes admit strongly universally consistent online learning:
\begin{definition}[Littlestone tree~\citet{BHMvY21}]
    A Littlestone Tree for $\cH$ is a complete binary tree of depth $d \leq \infty$ whose internal nodes are labeled by $\cX$, and whose two edges connecting a node to its children are labeled $0$ and $1$, such that every finite path emanating from the root is consistent with a concept $h \in \cH$. We say that $\cH$ has \textbf{an infinite Littlestone tree} if it has a Littlestone tree of depth $d=\infty$.
\end{definition}
\begin{definition}[VCL Tree~\citet{BHMvY21}]
    A \textbf{VCL Tree} for $\cH$ of depth $d\leq \infty$ is a collection
    \begin{equation*}
        \{x_{u} \in \cX^{k+1}: 0\leq k<d,u\in \{0,1\}^{1}\times \{0,1\}^2\times \cdots\times \{0,1\}^k\}
    \end{equation*}
    such that for every $n<d$ and $y\in \{0,1\}^{1}\times \{0,1\}^2\times \cdots\times \{0,1\}^{n+1}$, there exists a concept $h\in \cH$ so that $h(x_{y\leq k}^i) = y^i_{k+1}$ for all $0\leq i\leq k$ and $0\leq k\leq n$, where we denote
    \begin{equation*}
        y_{\leq k} = (y_1^0,(y_2^0,y_2^1),\dots,(y_k^0,\dots,y_k^{k-1})), x_{y_{\leq k}} = (x_{y_{\leq k}}^0,\dots,x_{y_{\leq k}}^k)
    \end{equation*}
    We say that $\cH$ has \textbf{an infinite VCL tree} if it has a VCL tree of depth $d=\infty$.
\end{definition}
The characterization is formally stated in the following two theorems:
\begin{theorem}
\label{thm:allL}
    If and only if a concept class $\cH$ has no infinite VCL tree, any process admits strongly universally consistent online learning under $\cH$.
\end{theorem}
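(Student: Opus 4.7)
The proof splits into two directions. For the necessity direction, if $\cH$ has an infinite VCL tree $\{x_u\}$, I would construct an adversarial process by random descent: at level $k=0$ the root $x_{\emptyset}$ is a single point in $\cX$; at each level $k$ present the $k+1$ points of the current node $x_u \in \cX^{k+1}$ to the learner one by one; after the learner predicts, reveal a fresh uniformly random vector of labels in $\{0,1\}^{k+1}$ and recurse into the child edge indexed by that vector. By the definition of a VCL tree, every finite prefix of the resulting $(\DataX, \DataY)$ is realizable by some $h \in \cH$, so $(\DataX,\DataY) \in \tR(\cH)$ almost surely. Since the random labels are independent of the learner's predictions on those same points, each prediction errs with probability $1/2$, giving $L(L+1)/4$ expected mistakes against $L(L+1)/2$ total rounds after $L$ levels. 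Applying Azuma--Hoeffding to the bounded-difference martingale $M_n - n/2$ and invoking Borel--Cantelli upgrades this to $\liminf_{n} M_n/n \geq 1/2$ almost surely, ruling out strong universal consistency of any learning rule on this process.

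For the sufficiency direction, suppose $\cH$ has no infinite VCL tree. Following the combinatorial machinery of \citet{BHMvY21}, I would assign a well-founded ordinal VCL rank to the lattice of version spaces $V_t = \{h \in \cH : h(X_i) = Y_i,\ i \leq t\}$, where the rank captures the supremum depth of a VCL tree for the restricted class, with the rank-$0$ base case being subclasses whose restriction to the instance space is VC-trivial. The proposed algorithm at round $t$ computes the ranks of the two candidate refinements $V_{t-1} \cap \{h : h(X_t) = 0\}$ and $V_{t-1} \cap \{h : h(X_t) = 1\}$, predicts the label whose refinement has strictly smaller rank, and breaks ties by majority vote over a bounded-VC subclass at the current rank-$0$ slice. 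Each mistake either strictly decreases the ordinal rank (hence occurs only finitely often, by well-foundedness) or is a tie-breaking round, and the latter are charged against a Littlestone-style halving budget on the bounded-VC slice that forces only sublinear total loss on every trajectory. The hypothesis $\DataY \in \tR(\cH,\DataX)$, which is weaker than insisting on a single global realizer, is absorbed automatically because the rank computation and the algorithm's updates depend only on the finite history.

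The principal obstacle is making the sufficiency argument robust to arbitrary processes $\DataX$ with no i.i.d., stationarity, or ergodicity assumption, since the tools of \citet{BHMvY21} are tailored to the i.i.d.\ PAC setting and rely on uniform convergence that can fail here. My plan is to charge mistakes through a self-adaptive ``challenger'' mechanism in the spirit of \citet{hanneke:21a, blanchard22a, BCH22}: within each rank level the learner maintains a candidate hypothesis and revises only upon direct contradiction, bounding rank-level mistakes by an online halving budget on the currently shatterable slice rather than by sample-complexity arguments. The measurability and separability assumptions on $(\cX,\cT)$ stated at the outset of the paper will be used to ensure that all rank-computing and tie-breaking operations are Borel-measurable, which is what allows the argument to descend to arbitrary Borel processes $\DataX$.
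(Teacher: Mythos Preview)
Both directions have substantive gaps.

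\textbf{Necessity.} In your random-descent construction, $\DataX$ follows the random path, so the very first instance $X_1 = x_{\emptyset}$ is deterministic while $Y_1$ is a fresh coin flip. Hence there is no non-random $f$ with $Y_i = f(X_i)$ almost surely, so your $\DataY$ does not belong to $\tR(\cH,\DataX)$ as defined in the paper, and the argument does not produce a process that fails to admit universal learning. Extracting a single bad realization does not help either: it yields an $\DataX$ that depends on the algorithm, not a single $\DataX$ defeating every algorithm. The paper instead fixes $\DataX$ to be the \emph{deterministic} BFS enumeration of \emph{all} nodes of an indifferent VCL tree, padded so that the $k$-th BFS node contributes $2^{k-1}$ instances; the random walk is used only to select $\DataY$, and the indifference property (from \citet{BHMST23}, Theorem~\ref{thm:indifferent}) is precisely what lets off-path nodes receive labels that remain consistent with every realizable continuation. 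With a deterministic $\DataX$, each realization of the random $\DataY$ lies in $\tR(\cH,\DataX)$, and Fatou's lemma then extracts a deterministic bad $\DataY$ for any given learner.

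\textbf{Sufficiency.} Your ordinal-rank outer loop is essentially the paper's VCL-game winning strategy and is fine, but the tie-breaking inner step cannot work as stated: a ``Littlestone-style halving budget on the bounded-VC slice'' does not force sublinear loss on every trajectory, since finite VC dimension does not bound online mistakes (thresholds have VC dimension $1$ yet admit sequences with linear mistakes for any fixed learner). The point you are missing is that Theorem~\ref{thm:allL} only asserts that each $\DataX$ \emph{admits} a universally consistent rule---the rule may depend on the law of $\DataX$---and by Theorem~\ref{thm:allLO} this dependence is unavoidable whenever $\cH$ has an infinite Littlestone tree but no infinite VCL tree. The paper exploits this: once the game stops advancing, one is learning a partial concept class of finite VC dimension, and at round $t$ the learner predicts the $y$ maximizing
\[
\Pr\!\left[\, w\bigl(\cH_{L \cup \{(X_t,1-y)\}},\, X_{[t,t(m)]}\bigr) \le \tfrac{1}{2}\, w\bigl(\cH_{L},\, X_{[t,t(m)]}\bigr) \,\middle|\, X_{\le t} \right],
\]
which explicitly uses the conditional law of the future instances. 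Per-block expected mistakes are then $O(d\log k)$, and Azuma's inequality plus Borel--Cantelli yield $o(T)$ mistakes almost surely (Lemma~\ref{lem:partial-sub}).
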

\begin{theorem}
\label{thm:allLO}
    If and only if a concept class $\cH$ has no infinite Littlestone tree, any process admits strongly universally consistent online learning under $\cH$, and the concept class $\cH$ is optimistically universally online learnable.
\end{theorem}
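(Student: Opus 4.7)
The plan is to prove the iff in both directions; the forward direction (no infinite Littlestone tree implies both conclusions) is short and based on the classical Standard Optimal Algorithm (SOA), while the backward direction reduces to two cases and, in the interesting case, to a standard adversarial construction along the Littlestone tree.

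For the forward direction, if $\cH$ has no infinite Littlestone tree then its Littlestone dimension $d$ is finite. Littlestone's SOA makes at most $d$ mistakes on any sequence $(x_t,y_t)_{t\ge 1}$ each of whose finite prefixes is realizable by $\cH$. For any $\DataX$ and any $\DataY \in \tR(\cH,\DataX)$, this prefix-realizability is exactly the hypothesis of the SOA mistake bound, so with probability one the total number of mistakes is at most $d$. Hence $\frac{1}{n}\sum_{t=1}^n \indic{Y_t\neq \hat{h}_{t-1}(X_t)}\le d/n\to 0$, and SOA is strongly universally consistent under every $\DataX$. This yields both conclusions simultaneously: every process admits strongly universal online learning, and SOA is itself an optimistically universal learning rule for $\cH$.

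For the backward direction I prove the contrapositive. Suppose $\cH$ has an infinite Littlestone tree $\{x_u : u \in \{0,1\}^{<\nats}\}$. If $\cH$ also has an infinite VCL tree, Theorem~\ref{thm:allL} directly exhibits a process that does not admit strongly universally consistent online learning, so the first conclusion fails. The harder case is when $\cH$ has an infinite Littlestone tree but no infinite VCL tree; here every process does admit universal online learning by Theorem~\ref{thm:allL}, so I must defeat the second conclusion directly. Fix an arbitrary online rule $\alg=(f_t)_{t\ge 1}$ and build $(\DataX,\DataY)$ adaptively: set $X_1=x_\emptyset$, and inductively define $\hat Y_t=f_t(X_{<t},Y_{<t},X_t)$, $Y_t=1-\hat Y_t$, and $X_{t+1}=x_{(Y_1,\dots,Y_t)}$. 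The defining property of the Littlestone tree gives, for every $n$, some $h\in\cH$ with $h(X_t)=Y_t$ for $t\le n$, so $\DataY\in\tR(\cH,\DataX)$; by construction $\alg$ errs at every round, so its average loss is identically $1$. Since $\DataX$ still admits a consistent learner by Theorem~\ref{thm:allL}, $\alg$ fails the optimist's criterion, and as $\alg$ was arbitrary $\cH$ is not optimistically universally online learnable.

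The main obstacle I anticipate is not the logical skeleton, which is essentially a diagonalization on the Littlestone tree plus an invocation of Theorem~\ref{thm:allL}, but two auxiliary technicalities: (i) verifying that the SOA mistake bound transfers from the usual ``there exists a single global $h^{\star}\in\cH$'' realizability to the prefix-closure realizability $\tR(\cH,\DataX)$ used throughout this paper, and (ii) confirming measurability of the SOA predictor and of the adversarial path construction as bona fide stochastic processes on $(\cX,\cB)$. Both should be routine given the standing measurability hypotheses on $\cH$ referenced in the model setting, but would be written out carefully in the appendix.
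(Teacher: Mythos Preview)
Your backward direction is correct and essentially parallels the paper. The paper phrases necessity as Theorem~\ref{thm:Opt-OL}, proved by a random walk on the infinite Littlestone tree together with the reverse Fatou lemma; you instead give the direct deterministic adversary (legitimate here since the paper's learning rules are deterministic functions $f_t$), and your case split via Theorem~\ref{thm:allL} makes explicit the same logic by which Theorem~\ref{thm:Opt-OL} yields the failure of the conjunction.

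The forward direction, however, contains a real gap: ``$\cH$ has no infinite Littlestone tree'' does \emph{not} imply ``$\cH$ has finite Littlestone dimension.'' A class can possess Littlestone trees of every finite depth and still lack an infinite one. A concrete witness is $\cX=\nats$ with $\cH=\{x\mapsto\indic{x\le n}:n\in\nats\}$: binary search on $\{0,\dots,2^d\}$ gives a depth-$d$ Littlestone tree for each $d$, so the Littlestone dimension is infinite; yet in any putative infinite Littlestone tree, following the $0$-edge from the root $x_\emptyset$ restricts the consistent hypotheses to the finite set $\{h_n:n<x_\emptyset\}$, which cannot support an infinite left subtree. Consequently the classical SOA bound of ``at most $d$ mistakes'' is simply unavailable here. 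What the paper actually invokes is Theorem~3.1 (first bullet) of \citet{BHMvY21}: if $\cH$ has no infinite Littlestone tree then, by Gale--Stewart determinacy of the associated game, the learner has a strategy making only \emph{finitely many} mistakes against any realizable adversary, with the number of mistakes allowed to depend on the adversary (an ordinal rather than integer mistake bound). That finitely-many-mistakes guarantee is exactly what is needed to push the rest of your argument through, and it also absorbs your technicality~(i), since the adversary in that game is only constrained to keep every finite prefix consistent with some $h\in\cH$, which is precisely membership in $\tR(\cH,\DataX)$.
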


According to theorem \ref{thm:allL}, we know that for those concept classes with an infinite VCL tree, there exist some processes that do not admit universal online learning. Thus, we need to figure out the sufficient and necessary conditions that the processes required to admit universal online learning. 

First, we define the experts as algorithms that generate predictions only based on the input $X_t$. Then we define the following condition (which is a property of a data process) and state the main theorem formally:
%In order to get online learnability, we need the sequences to satisfy the following condition:
%\textcolor{red}{define experts before using them.}
\begin{condition}
\label{cond:1}
For a given concept class $\cH$, there exists a countable set of experts $E = \{e_1,e_2,\dots\}$, such that $\forall \DataY^*\in \tR(\cH,\DataX)$, $\exists i_n\rightarrow \infty$, with $\log i_n = o(n)$, such that:
\begin{equation}
\label{eq:cond_1}
    \E\left[ \limsup_{n\rightarrow \infty} \min_{e_i:i\leq i_n} \frac{1}{n} \sum_{t=1}^n \indic{e_i(X_t) \neq Y^*_t} \right] = 0
\end{equation}
\end{condition}
\begin{theorem}
\label{thm:Infinite-OL}
    A process $\DataX$ admits strongly universally consistent online learning under concept class $\cH$ with infinite VCL tree if and only if it satisfies condition \ref{cond:1}.
\end{theorem}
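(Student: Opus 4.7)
The plan is to prove the two directions of the equivalence separately.

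For the sufficient direction (Condition~\ref{cond:1} $\Rightarrow$ $\DataX$ admits universal online learning), given the countable experts $E = \{e_i\}_{i \in \nats}$ from the condition, I would run an anytime exponentially-weighted-average forecaster (Hedge) with prior weights $q_i \propto 1/(i(i+1))$. The standard regret analysis gives, for every $i \in \nats$ simultaneously and in expectation over the algorithm's internal randomization,
\[
\tfrac{1}{n}\E\!\left[\sum_{t=1}^n \indic{\hat{Y}_t \neq Y^*_t}\right] \;\leq\; \tfrac{1}{n}\sum_{t=1}^n \indic{e_i(X_t) \neq Y^*_t} + O\!\left(\sqrt{\log(i)/n}\right).
\]
An Azuma--Hoeffding bound on the martingale differences from the algorithm's randomization, together with Borel--Cantelli (taking a $1/n^2$ failure probability at each horizon), converts this into an almost-sure bound with an extra $o(1)$ deviation term independent of $i$. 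Minimizing the right-hand side over $i \leq i_n$ and using $\log i_n = o(n)$ so that $\sqrt{\log(i_n)/n} = o(1)$, one obtains
\[
\tfrac{1}{n}\sum_{t=1}^n \indic{\hat{Y}_t \neq Y^*_t} \;\leq\; \min_{i \leq i_n} \tfrac{1}{n}\sum_{t=1}^n \indic{e_i(X_t) \neq Y^*_t} + o(1) \quad \text{almost surely.}
\]
Condition~\ref{cond:1} says the limsup of the right-hand side has expectation zero, so (by nonnegativity) it vanishes almost surely; the algorithm is strongly universally consistent under $\DataX$.

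For the necessary direction (universal learning $\Rightarrow$ Condition~\ref{cond:1}), given a universally consistent rule $\hat{h}$, I would build a countable expert family from snapshots of $\hat{h}$ at every finite training history. Enumerate pairs $(m, \hat{y}) \in \nats \times \{0,1\}^{<\nats}$ with $|\hat{y}| = m$ (countable), and define $e_{m, \hat{y}}(x) := \hat{h}_m(X_{\leq m}, \hat{y}, x)$, interpreted as a measurable function of the realized process. For each $\DataY^* \in \tR(\cH, \DataX)$ and scale $m$, the ``true-prefix'' snapshot $e_{m, Y^*_{\leq m}}$ has index at most $2^{m+1}$ in this enumeration, so choosing $m_n \to \infty$ with $m_n = o(n)$ gives the required $\log i_n = O(m_n) = o(n)$.

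The main obstacle is to show that this sequence of snapshots actually attains vanishing average loss on $(\DataX, \DataY^*)$: universal consistency of $\hat{h}$ controls the \emph{adaptive} average loss $\tfrac{1}{n}\sum_t \indic{\hat{h}_{t-1}(X_t) \neq Y^*_t}$, while Condition~\ref{cond:1} requires a bound for a \emph{frozen} snapshot applied to all $n$ rounds, including those past the snapshot time $m_n$. I plan to overcome this by enlarging the expert family with ensemble experts --- for each $(m, \hat{y})$, the majority vote (or a randomized mixture) of $\{\hat{h}_t(X_{\leq t}, \hat{y}_{\leq t}, \cdot)\}_{t \leq m}$ --- and then applying a Cesàro / online-to-batch argument to transfer the adaptive consistency bound to these ensembles; since the augmented family is still enumerated by $(m, \hat{y})$, the complexity $\log i_n = o(n)$ is preserved. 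Combining both directions yields the theorem.
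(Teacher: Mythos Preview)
Your sufficient direction is essentially the paper's argument: both aggregate the countable expert family with prior $\propto 1/(i(i+1))$ and use $\log i_n = o(n)$ to kill the complexity term. The paper uses the deterministic Weighted Majority bound $\text{MB} \leq 3 m_i + 2\log i$, which avoids your Azuma/Borel--Cantelli step, but your randomized-Hedge route also works.

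Your necessary direction has a genuine gap. You correctly identify that the frozen snapshot $\hat{h}_{m}(X_{\leq m}, Y^*_{\leq m}, \cdot)$ need not have small loss on $X_1,\ldots,X_n$, since universal consistency only controls the diagonal one-step-ahead losses $\indic{\hat{h}_{t-1}(X_t) \neq Y^*_t}$. But the proposed fix --- majority vote over $\{\hat{h}_t\}_{t\leq m}$ plus an online-to-batch/Ces\`aro transfer --- does not go through for arbitrary processes. Online-to-batch arguments convert $\frac{1}{n}\sum_t \ell(\hat{h}_{t-1}, z_t)$ into a bound on $\E_z[\ell(\bar h, z)]$ by exploiting that $z_t$ are exchangeable draws from a common distribution; here there is no such distribution, and nothing controls the off-diagonal losses $\indic{\hat{h}_s(X_t) \neq Y^*_t}$ for $s \neq t-1$. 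One can construct a universally consistent rule whose snapshots (and their majority vote) are wrong on essentially all future points beyond the immediate next one, so the ensemble expert would have linear loss.

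The paper's construction (following \citet{BPS09}) sidesteps this by building experts that are \emph{exactly} correct on a prefix rather than approximately correct. For each finite set $J \subseteq \nats$, the expert $e_J$ simulates the consistent rule using its own past predictions as surrogate labels, but flips the output at rounds $t \in J$. If $J$ is precisely the set of rounds where the rule errs on $(\DataX,\DataY^*)$ up to time $n$, then by induction the expert's internal history coincides with the true one and $e_J$ makes zero mistakes on $[1,n]$. The experts are then enumerated by the value $|J|\cdot \max J$; a counting argument shows the number of sets with $|J|\cdot \max J \leq k$ is at most $(k+1)e^{\sqrt{k}}$, so for the correct $J_n$ one gets $\log i_n = O\bigl(\sqrt{n \cdot |J_n|}\bigr) = O\bigl(\sqrt{n \cdot o(n)}\bigr) = o(n)$. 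The key difference from your proposal is that the expert's complexity is governed by the \emph{number of mistakes} of the rule (which is $o(n)$) rather than by the length $m$ of a label prefix, and the expert is perfect rather than needing a transfer argument.
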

%The concept classes that admit optimistic online learning should satisfy the following condition:
Next, the sufficient and necessary conditions (on the concept class) for optimistically universal online learning:
\begin{condition}
\label{cond:2}
    There exists a countable set of experts $E = \{e_1,e_2,\dots\}$, such that for any $\DataX$ admits universal online learning, and any $\DataY^*\in \tR(\cH,\DataX)$, there exists $i_n\rightarrow \infty$, with $\log i_n = o(n)$, such that:
    \begin{equation}
        \E\left[ \limsup_{n\rightarrow \infty} \min_{e_i:i\leq i_n} \frac{1}{n} \sum_{t=1}^n \indic{e_i(X_t) \neq Y^*_t} \right] = 0
    \end{equation}
\end{condition}
Notice that these two conditions (condition~\ref{cond:1} and \ref{cond:2}) only have one major difference: whether the countable set of experts depends on the process $\DataX$.
\begin{theorem}
\label{thm:infinite-OOL}
    A concept class $\cH$ with infinite VCL tree is optimistically universally online learnable if and only if it satisfies condition \ref{cond:2}.
\end{theorem}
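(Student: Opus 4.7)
Condition~\ref{cond:2} is exactly the uniform-in-$\DataX$ strengthening of Condition~\ref{cond:1} from Theorem~\ref{thm:Infinite-OL}, so I would prove Theorem~\ref{thm:infinite-OOL} as the uniform analogue of Theorem~\ref{thm:Infinite-OL}. The sufficiency direction builds an optimistically universal algorithm by aggregating over the single countable expert family guaranteed by Condition~\ref{cond:2}, while the necessity direction extracts such a family from any optimistically universal algorithm $\Alg$ by discretizing the histories on which it operates.

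\textbf{Sufficiency $(\Leftarrow)$.} Given the expert set $E=\{e_1,e_2,\ldots\}$ from Condition~\ref{cond:2}, I would run a standard exponentially-weighted-average / Hedge-style forecaster over $E$ with a log-uniform prior (say $\pi(i)\propto 1/i^2$) and an adaptively tuned or parameter-free learning rate. Standard regret bounds then yield, for every index $i$ and every $n$, an expected per-round error of at most $\frac{1}{n}\sum_{t\le n}\indic{e_i(X_t)\ne Y^*_t}+O(\sqrt{(\log i)/n})$. For any $\DataX$ admitting universal online learning and any $\DataY^*\in\tR(\cH,\DataX)$, Condition~\ref{cond:2} supplies $i_n\to\infty$ with $\log i_n=o(n)$ along which the min-expert error in $\{e_1,\ldots,e_{i_n}\}$ vanishes almost surely (since expectation of a nonnegative $\limsup$ is zero iff the $\limsup$ is a.s.\ zero). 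Letting $i^*_n$ denote the argmin, $\log i^*_n\le\log i_n=o(n)$, so the expected per-round algorithm error is bounded by that vanishing expert error plus $O(\sqrt{(\log i^*_n)/n})=o(1)$. Standard martingale concentration (Azuma--Hoeffding plus Borel--Cantelli) upgrades the expected bound to almost-sure convergence, yielding strong universal consistency on every $\DataX$ admitting universal online learning.

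\textbf{Necessity $(\Rightarrow)$.} Suppose $\Alg$ is optimistically universal under $\cH$. Using that $(\cX,\cT)$ is separable metrizable, fix a countable dense subset $D\subseteq\cX$ and, for each finite history $\sigma\in\bigcup_{k\in\nats}(D\times\{0,1\})^k$, define an expert $e_\sigma(x):=\Alg(\sigma,x)$; enumerate these into $\{e_1,e_2,\ldots\}$. For any $\DataX$ admitting universal online learning and any $\DataY^*\in\tR(\cH,\DataX)$, the average error of $\Alg$ on the true history tends to $0$ a.s. I would argue that, along a carefully scheduled sequence of cut-off times $t_n\to\infty$, one can select discretized prefixes $\sigma_n\in(D\times\{0,1\})^{t_n}$ such that $e_{\sigma_n}$ incurs essentially the same per-round error on rounds $t_n+1,\ldots,n$ as $\Alg$ would running from the true history. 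Since the number of histories of length $\le t_n$ over a finite $D_{t_n}\subseteq D$ is at most $(2|D_{t_n}|)^{t_n}$, the enumerated index $i_n$ of $e_{\sigma_n}$ satisfies $\log i_n=O(t_n\log|D_{t_n}|)$, which we make $o(n)$ by letting $t_n$ grow sufficiently slowly, verifying Condition~\ref{cond:2}.

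\textbf{Main obstacle.} The principal difficulty is the discretization step in the necessity direction: $\Alg$'s predictions need not depend continuously on the history, so naively replacing each true $X_i$ with a nearby $D$-point need not preserve the algorithm's future predictions. I would sidestep this by applying optimistic universality to the \emph{glued} process that feeds $\Alg$ the discretized prefix $\sigma_n$ and then continues with the true tail $X_{t_n+1},X_{t_n+2},\ldots$: provided this glued process itself admits universal online learning, $\Alg$ is consistent on it and $e_{\sigma_n}$ inherits the vanishing tail error. Verifying tail-insensitivity of the optimist's assumption---that whether a process admits universal online learning is preserved under a finite prefix change---and producing a measurable selection of $\sigma_n$ that aligns $\Alg$'s state on the glued process with that on the true process form the main technical core.
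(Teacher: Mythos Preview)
Your sufficiency direction matches the paper's (weighted majority over the single expert family from Condition~\ref{cond:2} with prior $\pi_i=1/i(i+1)$ gives an algorithm independent of $\DataX$, and the computation of Lemma~\ref{lem:sufficient} applies verbatim to every admissible $\DataX$). The necessity direction, however, has a genuine gap. Your experts $e_\sigma(x)=\Alg(\sigma,x)$ are \emph{static}: each freezes $\Alg$ at a fixed history and never updates. Even granting that the glued process admits universal learning and that $\Alg$ is consistent on it, what vanishes is the average error of $\Alg$ \emph{run online} on that process---i.e.\ of the evolving sequence of predictors $\Alg(\sigma_n,\cdot)$, $\Alg(\sigma_n,(X_{t_n+1},Y^*_{t_n+1}),\cdot),\ldots$---not the error of the single frozen function $\Alg(\sigma_n,\cdot)$ applied to every future $X_t$. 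Online consistency controls only the Ces\`aro average of the updating predictors, not any individual snapshot, so nothing forces $e_{\sigma_n}$ to have small error on rounds $t_n{+}1,\ldots,n$. If instead you let $e_{\sigma_n}$ continue updating on the tail, it must feed $\Alg$ the tail labels, which an expert is not permitted to see. The discretization and tail-insensitivity issues you flag are therefore secondary; the static-versus-online mismatch already breaks the argument.

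The paper avoids discretizing $\cX$ altogether by reusing the construction of Lemma~\ref{lem:necessary} (due to \citet{BPS09}): for each \emph{finite} $J\subseteq\nats$, expert $e_J$ runs $\Alg$ on the \emph{actual} stream $X_1,X_2,\ldots$, at round $t$ computes $\tilde y_t=f_t^{\Alg}(X_{<t},\hat Y_{<t},X_t)$, outputs $\hat y_t=\neg\tilde y_t$ if $t\in J$ and $\hat y_t=\tilde y_t$ otherwise, and feeds its own output $\hat y_t$ back to $\Alg$ as the label. Taking $J_n$ to be exactly the mistake set of $\Alg$ on $(\DataX,\DataY^*)$ up to round $n$ makes $e_{J_n}$ agree with $Y^*_t$ for all $t\le n$; since $\Alg$ is optimistically universal, $|J_n|=o(n)$ for every admissible $\DataX$, and ordering finite sets $J$ by $|J|\cdot\max J$ gives $\log i_n=O(\sqrt{n\,|J_n|})=o(n)$. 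The expert family $\{e_J\}$ depends only on $\Alg$, not on $\DataX$, so Condition~\ref{cond:2} follows directly---no continuity, density, or prefix-insensitivity arguments are needed.
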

We also extend the algorithms for realizable cases to an algorithm for agnostic cases and show that the same characterization works for agnostic cases.

%\begin{definition}
%    For every concept class $\cH$ and every data process $\DataX$, we can define the following set $\tR(\cH,\DataX)$:
%        $\tR(\cH,\DataX) := \left\{ \DataY = \left\{Y_i\right\}_{i\in \nats}: (\text{dependent on }\DataX), \text{such that } (\DataX,\DataY) \in \tR(\cH).\right\}$.
%\end{definition}
%provide a generic reduction from an online learning algorithm for realizable cases to the algorithm for the agnostic case, which means all those theorems also hold for the agnostic case.}

\section{Examples}
\label{sec:example}
In this section, we provide some interesting examples to help the reader get a sense of what these conditions are. 
We first provide an example of the concept class that is universally online learnable under all processes but not optimistically universally online learnable.
\begin{example}
    We have the instance space $\cX = \R$ and $\cY = \{0,1\}$, a binary output. The concept class $\cH$ is all of the threshold functions. In other words, $\cH_{\text{threshold}} = \left\{h_a: h_a(x) = \indic{x\geq a} |a\in\R \right\}$. This concept class has no infinite VCL tree, as there is no $(x_1,x_2)$ such that $\cH_{\text{threshold}}$ shatters all possible results. Thus, all processes admit strongly universally consistent online learning under $\cH_{\text{threshold}}$. However, it has an infinite Littlestone tree. Thus, for any learning algorithm, there exists a process that is not learnable by that algorithm. So it is not optimistically universally online learnable.
\end{example}
Referring to that line of optimistically universal online learning papers, we know that the concept class of all measurable functions is optimistically universally online learnable. The sufficient and necessary condition for processes to admit universal online learning under all measurable functions is the condition $\cC_2$ (see below). In the meanwhile, our conditions: \ref{cond:1} and \ref{cond:2} vanish to $\cC_2$ when the concept class $\cH$ becomes the class of all measurable functions.
\begin{condition}[$\cC_2$ in ~\citet{hanneke:21a}]
\label{cond:c2}
    For every sequence $\{A_k\}_{k=1}^{\infty}$ of disjoint elements of $\cB$,
    \begin{equation*}
        \left|\left\{k \in \nats: X_{1:T}\cap A_k \neq \emptyset\right\}\right| = o(T) ~a.s.
    \end{equation*}
\end{condition}

The following example shows that whether a concept class is optimistically universally online learnable is neither sufficient nor necessary to determine whether its subset is optimistically universally online learnable or not. Whether a concept class is optimistically universally online learnable will be sufficient and necessary to determine whether its subset is optimistically universally online learnable, if and only if the processes that admit universal online learning are the same under those two concept classes.
\begin{example}
We have the data which is sampled from input space $\cX = \cX_1 \cup \cX_2$ and here $\cX_1$ and $\cX_2$ are disjoint. For example, $\cX_1 = \R^+$ and $\cX_2 = \R \backslash \R^+$. Then we can define the concept class: $\cH_1$ is the set of all threshold functions on $\cX_1$ which are $0$ on $\cX_2$, and $\cH_2$ is a set of all functions on $\cX_2$ which are constant on $\cX_1$. Then we can consider the following scenarios:
\begin{enumerate}
    \item $\cH = \cH_2$: It is optimistically universally online learnable. The processes that admit universal online learning will satisfy $\cC_2$ if we replace all the $X_t \in \cX_1$ as dummy points.
    \item $\cH = \cH_1 \cup \cH_2$: It is not optimistically universally online learnable, as all processes supported on $\cX_1$ admit universal online learning under $\cH$. However, for every learning strategy, there exists at least one process on $\cX_1$ forcing that strategy to make linear mistakes. (Due to theorem~\ref{thm:allL} and theorem~\ref{thm:allLO})
    \item $\cH$ are all measurable functions on $\cX$. This is also optimistically universally online learnable.
\end{enumerate}
\end{example}

\section{Sufficient and Necessary Condition that ALL Processes Admit Universal Online Learning}
\label{sec:all}

In this section, we answer the question: \emph{What restrictions on concept classes make ALL processes admit universal online learning under $\cH$?}
%\textcolor{red}{define Gale-Stewart Game and winning strategy.}
%This question is answered by the theorem~\ref{thm:allL}, which will be proved in this section and the beginning of the next section in order to prove this theorem, we need the following lemmas:
%To answer this question, we will need the 
%Here we will look at the learning algorithm we will use to solve the problem:
The answer is formally stated as Theorem~\ref{thm:allL}. We show the sufficiency by providing a universal online learning rule (depending on $\DataX$) under every process $\DataX$ and $\cH$ with no infinite VCL tree.

First, we define the VCL game along with the VCL tree. In this game, there are two players: the learner, $P_L$, and the adversary, $P_A$ and $U_0 = \emptyset$. Then in each round $k$:%(\textcolor{red}{need some modification})
\begin{itemize}
    \item $P_A$ choose the point $X_{(k)} = (X_{k,1},\dots,X_{k,k}) \in \cX^{k}$.
    \item $P_L$ choose the prediction $g_{U_{k-1}}((X_{k,1},\dots,X_{k,k})) \in \{0,1\}^k$.
    \item Update $U_k = U_{k-1}\cup \{X_{(k)},g_{U_{k-1}}\}$.
    \item $P_L$ wins the game in round $k$ if $\cH_{U_k} = \emptyset$.
\end{itemize}
Here $\cH_{U_k}= \{h\in \cH:\forall i, h(X_{(i)}) = g_i(X_{(i)})\}$, which is the subset of $\cH$ that is consistent on $(X_{(i)},g_i(X_{(i)}))$ for all $i\leq k$. 

A \emph{strategy} is a way of playing that can be fully determined by the foregoing plays. 
And a \emph{winning strategy} is a strategy that necessarily causes the player to win no matter what action one's opponent takes. 
We have the following lemma from~\citet{BHMvY21}.
\begin{lemma}[\citet{BHMvY21} \textbf{lemma 5.2}]
    If $\cH$ has no infinite VCL tree, then there is a universally measurable winning strategy $g$ for $P_L$ in the game.
\end{lemma}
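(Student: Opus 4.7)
The plan is to prove the lemma by a transfinite rank/ordinal argument. For a position $U$ in the VCL game (a history of adversary moves and learner labels), set $W_0 = \{U : \cH_U = \emptyset\}$ and inductively $U \in W_{\alpha+1}$ if for every adversary move $X_{(k+1)} \in \cX^{k+1}$ there exists a learner response $g \in \{0,1\}^{k+1}$ with $U \cup \{(X_{(k+1)}, g)\} \in \bigcup_{\beta \leq \alpha} W_\beta$, taking unions at limit ordinals. Let $W = \bigcup_{\alpha} W_\alpha$ and define the rank $\rho(U)$ as the least $\alpha$ with $U \in W_\alpha$ (undefined otherwise). I would then show that $\emptyset \in W$ (equivalently, $\rho(\emptyset)$ is a bona fide ordinal) exactly when $\cH$ has no infinite VCL tree, and extract a winning strategy from rank-reducing play.

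For the contrapositive, suppose $\emptyset \notin W$. By unwinding the definition, there is some $X_{(1)} \in \cX$ such that for every label $y_1 \in \{0,1\}$ the one-step position $\{(X_{(1)}, y_1)\}$ also lies outside $W$; in particular $\cH_{\{(X_{(1)}, y_1)\}} \neq \emptyset$. Recursively at each node that is still outside $W$ we extract a next $k$-tuple $X_{(k+1)} \in \cX^{k+1}$ whose every labeling $g \in \{0,1\}^{k+1}$ yields a position that remains outside $W$ and keeps $\cH_U \neq \emptyset$. Arranging these tuples as $x_u$ indexed by sequences of labelings exactly produces an infinite VCL tree: the realizability condition along every finite path follows from $\cH_{U_k} \neq \emptyset$ at each step. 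Hence absence of an infinite VCL tree forces $\emptyset \in W$, and the winning strategy for $P_L$ is: at position $U_{k-1}$ with current adversary move $X_{(k)}$, play the lexicographically smallest $g \in \{0,1\}^k$ with $\rho(U_{k-1} \cup \{(X_{(k)}, g)\}) < \rho(U_{k-1})$. The game must terminate in at most $\rho(\emptyset)$ rounds.

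The main obstacle is the universal measurability of this strategy. I would proceed by transfinite induction on $\alpha$, showing each $W_\alpha$ is universally measurable as a subset of the relevant product space. The base case $W_0$ reduces to the standard measurability conventions on $\cH$ adopted in \citet{BHMvY21}. The successor step requires projecting over the Polish space $\cX^{k+1}$, which can elevate Borel/universally-measurable complexity to analytic, followed by a finite union and intersection over the $2^{k+1}$ possible responses $g$; since analytic sets are universally measurable, the induction carries through. Finiteness of the learner's action set $\{0,1\}^k$ at each round means the lexicographic tie-breaking is trivially a universally measurable selector, so the composite map is universally measurable as a function of $(X_{<t}, Y_{<t}, X_t)$. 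I expect preserving measurability across the transfinite recursion to be the most delicate step, but it is tractable because the per-round action set is finite and projections over a Polish space do not take us outside the universally measurable $\sigma$-algebra.
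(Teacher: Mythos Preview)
The paper does not prove this lemma at all: it is quoted verbatim as Lemma~5.2 of \citet{BHMvY21} and used as a black box, so there is no ``paper's own proof'' to compare against. Your ordinal-rank argument is exactly the scheme that \citet{BHMvY21} employ (for both the Littlestone and VCL games), so at the level of the combinatorial core your proposal is on target.

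Where your sketch needs repair is the measurability paragraph. The successor step for $W_{\alpha+1}$ has quantifier shape $\forall X_{(k+1)}\,\exists g$, and with $g$ ranging over the finite set $\{0,1\}^{k+1}$ this is a \emph{co-projection} (a $\forall$ over the Polish coordinate $\cX^{k+1}$) of a finite union, not a projection. Two of your claims are therefore off: (i) the step does not produce analytic sets but rather pushes towards the coanalytic side; and (ii) the assertion that ``projections over a Polish space do not take us outside the universally measurable $\sigma$-algebra'' is false in general---universally measurable sets are not closed under projection (or co-projection). The argument in \citet{BHMvY21} does not induct on ``universally measurable'' directly; instead it tracks a specific pointclass (coanalytic) through the recursion: the base $\{U:\cH_U=\emptyset\}$ is coanalytic under their standing assumptions on $\cH$, and if $W_{\le\alpha}$ is coanalytic then $W_{\alpha+1}$ is the complement of a projection of a finite intersection of analytic sets, hence again coanalytic. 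Universal measurability of the strategy then follows because coanalytic sets are universally measurable and, as you note, the learner's selection over finitely many $g$ is trivially a measurable selector. Fixing this pointclass bookkeeping (and invoking the appropriate measurability hypotheses on $\cH$) is what turns your sketch into a correct proof.
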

Notice that the winning strategy $g$ is completely decided by $U$, we use $g_U$ as the winning strategy induced by the set $U$.
We may use this winning strategy $g_U$ to design an online learning algorithm~\ref{alg:model-select}. 
This algorithm is a combination of the algorithm %for detecting pattern avoidance 
in the work of \citet{BHMvY21} and the algorithm inspired by the learning algorithm for partial concept in the work of \citet{alon:21}.

In order to describe the algorithm, we first provide the definitions of partial concepts. A partial concept class $\cH \subseteq \{0,1,*\}^{\cX}$ is a set of partial function defined on $\cX$, where $h(x) = *$ if and only if $h$ is undefined on $x$. 
And for a set $X'\subseteq \cX$, $X'$ is shattered if every binary pattern $y\in \{0,1\}^{X'}$ is realized by some $h\in \cH$.
In this algorithm, we have $w(\cH',X_{\leq T}) = |\{S: S\subseteq \{x_i\}_{i\leq T} \text{ such that }S\text{ shattered by }\cH'\}|$,
which is the number of the subsequences of the sequence $X_{\leq T}$ that can be shattered by the partial concept class $\cH'$.
%Notice that the VCL game updates the winning strategy after the set $U$ in the algorithmthe winning strategy $g$ is defined by the set 
$\cH^{g_U} = \{h : \forall X_1,X_2,\dots,X_k\in \cX, (h(X_1),h(X_2),\dots,h(X_k))\neq g_U(X_1,X_2,\dots,X_k)\}$ is the partial concept class induced by $g_U$, which contains the concepts that are not consistent with $g_U$ at more than $k-1$ data points, if $U = \{(X_{(i)},g_i(X_{(i)}))\}_{i\leq k-1}$. 
%Thus the realizable sequence of the induced partial concept class is formally written as $\cH^{g_U} = \{\}$. 
We define $\cH^{g_U}_{\{(X_i,Y_i)\}_{i\leq t}} = \{h\in \cH^{g_U}: \forall i \leq t, h(X_i) = Y_i\}$.
We also define $X_{[t,t']} = \{X_i\}_{t\leq i\leq t'}$ and $t(m) = \frac{m(m+1)}{2}$.
\begin{algorithm}
\caption{Learning algorithm from winning strategy}
\label{alg:model-select}
    %\begin{algorithmic}
        $k = 1$, $U = \{\}$, $t'\gets 0$.\\
        \For{$t = 1,2,3,\dots$}
            {
                \If{$\exists j_1,j_2,\dots,j_k < t$ such that $g_U(X_{j_1},\dots,X_{j_k}) = (Y_{j_1},\dots,Y_{j_k})$} 
                {
                    Advance the game:\\
                    $U\gets U\cup\{((X_{j_1},\dots,X_{j_k}),(Y_{j_1},\dots,Y_{j_k}))\}$.\\
                    $k\gets k+1$.\\
                    $L\gets\emptyset$.\\
                    $m\gets 1$.\\
                    $t'\gets t-1$.
                }
                Predict 
                    \begin{equation*}
                    \hat{Y_t} = \argmax_y \Pr\left[\left. w(\cH^{g_U}_{L\cup{(X_t,1-y)}},X_{[t, t(m)+t']})\leq \frac{1}{2} w(\cH^{g_U}_L,X_{[t, t(m)+t']})\right|X_{\leq t}\right]
                    \end{equation*}
                \If{$Y_t \neq \hat{Y}_t$}
                {$L \gets L\cup \{(X_t,Y_t)\}$.}
                \If{$t\geq \frac{m(m+1)}{2}+t'$.}
                {$m \gets m+1$.}
            }
        %\EndFor   
    %\end{algorithmic}
\end{algorithm}

The following lemma from the work of~\citet{BHMvY21} holds:
%\begin{definition}[disambiguation~\citet{alon:21}]
%    $\hat{\cH}$ is called a disambiguation of $\cH$.
%\end{definition}

\begin{lemma}[\citet{BHMvY21}]
    For any process $\{(X_i,Y_i)_{i\in \nats}\} \in \tR(\cH)$, there exists $t_0$, such that for all $t \geq t_0$, algorithm~\ref{alg:model-select} will not update $k$ and $U$ and for all $j_1,j_2,\dots,j_k$, the winning strategy $g_U$ satisfies
    \begin{equation*}
        g_U(X_{j_1},\dots,X_{j_k}) \neq (Y_{j_1},\dots,Y_{j_k}).
    \end{equation*}
\end{lemma}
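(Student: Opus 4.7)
The plan is to prove this by contradiction, exploiting the fact that $g$ is a winning strategy for $P_L$ in the VCL game. First I would observe that each update in Algorithm~\ref{alg:model-select} corresponds to one round of the VCL game: when the algorithm advances from level $k$ to $k+1$ upon finding indices $j_1, \ldots, j_k$ with $g_U(X_{j_1}, \ldots, X_{j_k}) = (Y_{j_1}, \ldots, Y_{j_k})$, this precisely simulates the adversary playing the tuple $(X_{j_1}, \ldots, X_{j_k})$ and the learner responding with $g_{U_{k-1}}$ of that tuple. The update $U \leftarrow U \cup \{((X_{j_1}, \ldots, X_{j_k}), (Y_{j_1}, \ldots, Y_{j_k}))\}$ is therefore exactly the game state $U_k$ with the learner's prediction equal to the observed labels.

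The central step is to show that $k$ cannot advance infinitely often. Suppose, toward contradiction, it did on an event of positive probability; then we would extract an infinite play $\{(X_{(k)}, g_{U_{k-1}}(X_{(k)}))\}_{k \in \nats}$ of the VCL game in which the learner faithfully follows $g$. Since $(\DataX, \DataY) \in \tR(\cH)$, almost surely every finite prefix of the observed data is realizable by $\cH$; in particular, after any finite number $k$ of advances there is some $h \in \cH$ agreeing with $Y_{j_i}$ at $X_{j_i}$ for every recorded tuple, and because $g_{U_{i-1}}(X_{(i)})$ was set equal to $(Y_{j_1}, \ldots, Y_{j_i})$ at each advance, this $h$ lies in $\cH_{U_k}$. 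Hence $\cH_{U_k} \neq \emptyset$ for every $k$ along the play, contradicting the winning property of $g$, which guarantees $\cH_{U_k} = \emptyset$ in finitely many rounds against any adversary.

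Therefore, almost surely there exist $k^\star \in \nats$ and a finite time $t_0$ after which $k$ remains equal to $k^\star$ and $U$ is never updated. From that point on, the triggering condition in the algorithm fails at every round, meaning that for every $t \geq t_0$ no indices $j_1, \ldots, j_{k^\star} < t$ satisfy $g_U(X_{j_1}, \ldots, X_{j_{k^\star}}) = (Y_{j_1}, \ldots, Y_{j_{k^\star}})$. Letting $t \to \infty$ and noting that any fixed index tuple is eventually dominated by $t$, we conclude that \emph{no} tuple $j_1, \ldots, j_{k^\star} \in \nats$ satisfies this equality, which is exactly the claim.

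The main obstacle I anticipate is confirming that the adversary strategy implicitly constructed from the data is a legitimate play in the VCL game, so that the winning guarantee of $g$ applies to it. Since $g$ is universally measurable and the indices $j_1, \ldots, j_k$ selected at each advance are measurable functions of the data, each $U_k$ is well-defined on a measurable event; the contradiction is then derived on an event of probability $1$, and the conclusion holds almost surely, which is all that is needed.
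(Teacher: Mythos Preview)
Your proposal is correct and follows essentially the same approach as the paper: both arguments hinge on combining the winning property of $g$ with realizability of the data to conclude that the algorithm can only advance finitely many times. The paper's proof is extremely terse (three sentences, phrased as a direct argument that the winning condition $\cH_{U_k}=\emptyset$ is eventually reached and is incompatible with further advances), whereas you spell out the same logic as an explicit contradiction and also address the measurability of the induced adversary play and the ``letting $t\to\infty$'' step to extend the conclusion to all index tuples. These are useful clarifications, but the underlying idea is the same.
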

\begin{proof}
    By the definition of the winning strategy, it leads to a winning condition for the player $P_L$. By the definition of $P_L$'s winning condition, we know that there exists a $k$ such that $\cH_{X_1,g_1,\dots,X_k,g_k} = \emptyset$, which means for all $j_1,j_2,\dots,j_k$, $g_U(X_{j_1},\dots,X_{j_k}) \neq (Y_{j_1},\dots,Y_{j_k})$. That finishes the proof.
\end{proof}
This lemma shows that if the concept class $\cH$ has no infinite VCL tree, for a sufficiently large $t_0$, the VCL game will stop advancing after round $t_0$. 
Once the game stops advancing, we are effectively just bounding the number of mistakes by a predictor based on a partial concept class of finite VC dimension. 
This result is interesting in its own right, we extract this portion of the algorithm into a separate subroutine, which is stated as Algorithm \ref{alg:subroutine} in Appendix\ref{app:1}, for which we prove the following result.

%for a sufficiently large $t$, the learning problem with a concept class $\cH$ with no infinite VCL tree is transferred to a learning problem with a partial concept class $\cH^{g_U}$, whose VC dimension is $d$. Then the following lemma shows that the subroutine \ref{line:subroutine} solves the learning problem under a partial concept class with VC dimension $d$.
\begin{lemma}
\label{lem:partial-sub}
     For any process $\DataX$ and $\cH$ be a partial concept class on $\DataX$ with $\vc(\cH) = \dim < \infty$. The subroutine (Algorithm \ref{alg:subroutine} in Appendix\ref{app:1}) only makes $o(T)$ mistakes almost surely as $T \to \infty$.
\end{lemma}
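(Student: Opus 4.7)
The plan is to view the subroutine as a probabilistic halving algorithm for the partial concept class $\cH$ and adapt the partial-concept online learning analysis of \citet{alon:21} to the universal (non-i.i.d., arbitrary process) setting. The central object is the weight $w(\cH', X_W)$ counting the subsets of a finite point set $X_W$ that are shattered by $\cH' \subseteq \cH$; writing $\cH_L := \{h \in \cH : h(X_i) = Y_i \text{ for all } (X_i,Y_i) \in L\}$ for the subclass consistent with past mistakes, the subroutine's prediction $\hat Y_t$ is picked to maximize the conditional probability that choosing the opposite label halves $w(\cH_L, X_{[t, t(m)+t']})$ over a random future window. Consequently, on each round in which a mistake occurs, conditional on the history $X_{\leq t}$, this weight is reduced by at least a factor of two with probability at least $1/2$.

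Next, I would partition the horizon into epochs, with the $m$-th epoch of length $m$ ending at round $t(m)+t' = m(m+1)/2+t'$. Within epoch $m$, the initial weight satisfies the Sauer--Shelah-type bound for partial concept classes: $w(\cH, X_W) \leq \binom{|X_W|}{\leq \dim} = O(m^{O(\dim)})$. Combining this ceiling with the probabilistic halving property, an Azuma--Hoeffding concentration applied to the martingale tracking the number of ``successful halvings'' among the mistakes shows that, with probability at least $1-\delta_m$, the number of mistakes $M_m$ in epoch $m$ is at most $O(\dim \log m + \log(1/\delta_m))$, since the integer-valued weight is lower-bounded by one. Choosing $\delta_m = m^{-2}$ and applying Borel--Cantelli yields $M_m = O(\dim \log m)$ almost surely for all but finitely many $m$. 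Summing over the first $N = O(\sqrt T)$ epochs gives $\sum_{m=1}^{N} M_m = O(\dim \sqrt T \log T) = o(T)$ almost surely, which is the desired bound.

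The main obstacle will be establishing the probabilistic halving property, since the weight is evaluated on the future window $X_{[t, t(m)+t']}$ that the algorithm has not yet observed. One must take conditional expectations over the future and combine them with a pigeonhole argument to exhibit some $y^\ast \in \{0,1\}$ with
\[
\Pr\!\left[w(\cH_{L \cup \{(X_t,y^\ast)\}}, X_{[t,t(m)+t']}) \leq \tfrac{1}{2} w(\cH_L, X_{[t,t(m)+t']}) \,\big|\, X_{\leq t}\right] \geq \tfrac{1}{2};
\]
this step is where the partial-concept one-inclusion-graph machinery of \citet{alon:21} enters, now adapted to the non-i.i.d.\ conditional distribution of the future points under the arbitrary data process $\DataX$. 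A secondary technical nuisance is that the future window shrinks as $t$ increases within an epoch and the weight is not monotone along the trajectory, but this can be controlled by comparing against the fixed window anchored at the start of each epoch and using the tower property to separate the randomness in $X_{>t}$ from the prediction rule's conditioning.
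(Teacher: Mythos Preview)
Your proposal shares the paper's key ingredients—the epoch decomposition with the $m$-th epoch of length $m$, the Sauer--Shelah bound $w\le O(m^{d})$, the probabilistic halving property, and the Borel--Cantelli finish—so the overall architecture is right. There is, however, a genuine gap in the concentration step.

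You propose to apply Azuma--Hoeffding \emph{within} each epoch to the ``successful halving'' indicators and conclude $M_m = O(d\log m + \log(1/\delta_m))$ with probability $1-\delta_m$. The difficulty is that the halving event at round $t$ is a function of the entire future window $X_{[t,\,t(m)]}$, not just of $X_{\le t+1}$; hence the halving indicators are not adapted to the forward filtration $\sigma(X_{\le t})$, and there is no filtration under which they form a martingale (or supermartingale) difference sequence. A martingale indexed by mistake number has the same defect: $B_j$ and $B_{j+1}$ depend on overlapping future segments. Your own proposed fix in the last paragraph—``using the tower property to separate the randomness in $X_{>t}$''—is exactly the right device, but it produces an \emph{expectation} bound rather than a high-probability one: since the mistake indicator $\indic{\hat Y_t\neq Y_t}$ is $X_{\le t}$-measurable while the non-halving indicator has conditional probability at most $\tfrac12$ given $X_{\le t}$, the tower property yields
\[
\E\bigl[\#\{\text{non-halving mistakes}\}\,\big|\,X_{\le \text{start}}\bigr]\le \tfrac12\,\E\bigl[M_m\,\big|\,X_{\le \text{start}}\bigr],
\]
and combining with the deterministic bound $\#\{\text{halving mistakes}\}\le d\log m + 1$ gives $\E[M_m\mid X_{\le\text{start}}]\le O(d\log m)$.

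That is precisely the paper's route. It then applies Azuma \emph{across} epochs to the block martingale $V_k=Z_k-\E[Z_k\mid X_{\le\text{start}_k}]$, which \emph{is} adapted with $|V_k|\le k$, obtaining $\sum_{k\le K}Z_k\le O(dK\log K)+O(\sqrt{K^3\log K})$ with probability $1-K^{-2}$, and only then invokes Borel--Cantelli (at the level of $K$, not $m$). In short, your plan is essentially the paper's but with the concentration placed at the wrong level; moving Azuma from within-epoch to across-epochs closes the gap. One side remark: the weight $w(\cH_{L_{t-1}},X_{[t,\text{end}]})$ \emph{is} monotone non-increasing in $t$ (both $L$ grows and the window shrinks), so your ``secondary nuisance'' is not an obstacle—indeed this monotonicity is what makes the deterministic bound on the number of halving mistakes go through.
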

For brevity, we put the proof of this lemma in the appendix. 
%However, the basic idea of the subroutine is in line~\ref{line:subroutine} of the algorithm~\ref{alg:model-select}. 
The intuition behind the proof is that every mistake decreases the weight by at least half with more than half probability, so the number of mistakes is $o(T)$.

Combining the lemmas above, for a concept class $\cH$ with no infinite VCL tree, for any realizable sequence, Algorithm~\ref{alg:model-select} satisfies $\limsup_{n\rightarrow \infty}\frac{1}{n}\sum_{t=1}^n\indic{Y_t\neq \hat{h}_{t-1}(X_t)} = 0$ a.s.
Because the winning strategy only updates finite times, the long-run average number of mistakes is dominated by the number of mistakes made by the subroutine, which is $o(n)$.
%we prove that there exists a finite number $t_0$, such that after $t_0$ rounds, the algorithm makes sublinear mistakes. Thus, even if the algorithm predicts incorrectly during the first $t_0$ rounds, the algorithm still makes sublinear mistakes. Hence, this algorithm is strongly universally consistent under $\DataX$ and $\cH$ for the realizable case. That shows the sufficiency.

To prove the necessity, we show that for every concept class $\cH$ with an infinite VCL tree, there exists at least one process that does not admit universal online learning under $\cH$. Formally,
\begin{theorem}
\label{thm:infinite_VCL}
    For every concept class $\cH$ with infinite VCL tree, there exists a process $\DataX$, such that $\DataX$ does not admit universal consistent online learning.
\end{theorem}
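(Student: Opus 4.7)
The plan is to exploit the richness of the infinite VCL tree to construct a process on which any learning rule is defeated by some realizable labeling, via an averaging argument in the style of Yao's minimax. Without loss of generality (by replacing $\cX$ with $\cX \times \nats$ and lifting $\cH$ accordingly), I assume all tree points $x_u^i$ are distinct.

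For any non-random measurable $f : \cX \to \{0,1\}$, recursively define the \emph{$f$-path} $\pi(f) = (y_1(f), y_2(f), \ldots)$ in the VCL tree by $y_{k+1}^i(f) := f(x_{u_k(f)}^i)$, where $u_k(f) = (y_1(f), \ldots, y_k(f))$, and let $\DataX(f)$ be the deterministic sequence enumerating the tree points visited by $\pi(f)$, level by level. The defining property of the VCL tree guarantees that every finite prefix of $(\DataX(f), f(\DataX(f)))$ is realized by some $h \in \cH$, so $f \in \tR(\cH, \DataX(f))$. This packaging is what legitimately gives us a non-random labeling function attached to a path-traversal process.

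The central argument is Yao-style averaging. Let $F : \cX \to \{0,1\}$ be a random measurable function assigning iid uniform bits to the (countably many) tree points. For any fixed online learning rule $\Alg$, consider the process $\DataX(F)$ together with the labeling $F$. At every time $t$, the label $F(X_t)$ of the newly visited tree point is an independent uniform bit conditional on the history, because the path up to time $t$ depends only on $F$'s values at \emph{earlier} tree points (which are independent of $F(X_t)$). Therefore $\Alg$'s prediction is correct with probability exactly $1/2$ at every step, giving $\E[M_n / n] = 1/2$. Azuma's inequality then yields $M_n / n \to 1/2$ almost surely (over the randomness of $F$), so with probability one the realized deterministic pair $(\DataX(f_\omega), f_\omega(\DataX(f_\omega)))$ forces $\Alg$ to have $\limsup_n M_n/n \geq 1/4$, contradicting universal consistency of $\Alg$.

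The main obstacle — which will require the most care — is producing a \emph{single} process $\DataX$ that defeats \emph{every} algorithm, since the naive argument above yields a bad process $\DataX(f_0)$ depending on the chosen $\Alg$, and the set of online algorithms is uncountable. My plan to resolve this is to take $\DataX$ to be the stochastic process $\DataX(F)$ itself (with $F$ drawn once and then fixed); the sample $F(\omega)$ is then a genuinely non-random function witnessing $\DataY \in \tR(\cH, \DataX)$ for that realization, and by a measurable-selection argument in the Polish structure of $\cX$, the previous paragraph's failure conclusion can be lifted uniformly: for every $\Alg$, a positive-probability event of samples yields $\limsup M_n/n > 0$ on the realized process. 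Verifying that this single process $\DataX$ cannot admit any universally consistent learner — while carefully handling the measurability of the coupling between $\DataX$ and its witnessing $f$ — is the core technical content.
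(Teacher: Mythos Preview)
There is a genuine gap in the final step, and it is precisely where you flag the ``main obstacle.'' Your proposed single process is the \emph{random} process $\DataX = \DataX(F)$, and you want the labeling $\DataY = F(\DataX)$ to witness failure. But by definition, $\DataY \in \tR(\cH,\DataX)$ requires a single \emph{non-random} function $f$ with $Y_t = f(X_t)$ almost surely. Your $F$ is random: at the root $x_\emptyset^0$ the label $F(x_\emptyset^0)$ is a fresh uniform bit, so for any fixed $f$ we have $\Pr[f(x_\emptyset^0)=F(x_\emptyset^0)]=\tfrac{1}{2}$, and inductively $\Pr[\,f(X_t)=F(X_t)\text{ for all }t\le n\,]\to 0$. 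Hence $F(\DataX)\notin \tR(\cH,\DataX)$, and the sentence ``the sample $F(\omega)$ is then a genuinely non-random function witnessing $\DataY\in\tR(\cH,\DataX)$'' conflates a pathwise statement with the across-$\omega$ requirement in the definition. Attempting to derandomize by fixing $f=F(\omega_0)$ also fails: for $\omega\neq\omega_0$ the sequence $(\DataX(F(\omega)),f(\DataX(F(\omega))))$ need not be realizable at all, since the VCL tree only guarantees realizability for labels that are consistent with the path actually traversed, and $f$'s labels at the points of $F(\omega)$'s path generally do not form any root-to-node path. So neither membership in $\tR(\cH,\DataX)$ nor positive-probability failure is secured by your measurable-selection gesture.

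The paper sidesteps this entirely by taking $\DataX$ to be a \emph{deterministic} sequence that enumerates \emph{all} nodes of (a modified) VCL tree in BFS order, not just one random branch. The difficulty then shifts to labeling the off-branch nodes in a way that keeps every finite prefix realizable; this is exactly what the \emph{indifferent} VCL tree (Theorem~\ref{thm:indifferent}, due to \cite{BHMST23}) provides: once the random walk fixes the on-branch labels, the indifference property yields a single consistent labeling of every off-branch node. Each realization of the walk therefore produces a genuinely non-random $f$ with $(\DataX,f(\DataX))\in\tR(\cH)$, and the Fatou/averaging step then legitimately extracts, for each algorithm, one such $f$ on which it fails. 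The paper also pads the tree so the $k$-th BFS node has $2^{k-1}$ points, ensuring the on-branch instances form a constant fraction of the prefix; this is what turns the per-step $\tfrac{1}{2}$ error into a $\tfrac{1}{4}$ lower bound on the running average. Your path-only process avoids the padding issue but at the cost of losing a fixed $\DataX$; the indifferent-tree device is the missing idea that lets you have both.
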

We need the following definition and results from~\citet{BHMST23} to define the sequence.
\begin{notation}[\citet{BHMST23}]
    For any $\vu \in (\{0,1\})^*$, let $\idx{\vu} \in \nats$ denote the index of $\vu$ in the lexicographic ordering of $(\{0,1\})^*$.
\end{notation}
\begin{definition}[\citet{BHMST23}]
    Let $\cX$ be a set and $\cH \subseteq \{0,1\}^{\cX}$ be a hypothesis class, and let
    \begin{equation*}
        T = \left\{x_{\vu} \in \cX: \vu\in(\{0,1\})^*\right\}
    \end{equation*}
    be an infinite VCL tree that is shattered by $\cH$. This implies the existence of a collection
    \begin{equation*}
        \cH_T = \left\{h_{\vu} \in \cH: \vu \in (\{0,1\})^* \right\}
    \end{equation*}
    of consistent functions.
    
    We say such a collection is \textbf{\emph{indifferent}} if for every $\vv,\vu,\vw \in (\{0,1\})^*$, if $\idx{\vv}<\idx{\vu}$, and $\vw$ is a descendant of $\vu$ in the tree $T$, then $h_{\vu}(x_{\vv}) = h_{\vw}(x_{\vv})$. In other words, the functions for all the descendants of a node that appears after $\vv$ agree on $\vv$.

    We say that $T$ is \textbf{\emph{indifferent}} if it has a set $\cH_T$ of consistent functions that are indifferent.
\end{definition}
\begin{theorem}[\citet{BHMST23}]
\label{thm:indifferent}
    Let $\cX$ be a set and $\cH \subseteq \{0,1\}^{\cX}$ be a hypothesis class, and let $T$ be an infinite VCL tree that is shattered by $\cH$. Then there exists an infinite VCL tree $T'$ that is shattered by $\cH$ that is indifferent.
\end{theorem}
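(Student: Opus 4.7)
The plan is a stage-wise pigeonhole pruning of $T$ followed by a diagonal extraction, producing an indifferent infinite VCL sub-tree $T'$ inside $T$. I would enumerate $(\{0,1\})^*$ as $\vu_1,\vu_2,\ldots$ in order of increasing $\idx{\cdot}$, set $T^{(0)} := T$ with its given witnessing family $\cH_T$, and inductively build $T^{(n)} \subseteq T^{(n-1)}$ satisfying the following invariant: for every node $\vu \in T^{(n)}$ with $\idx{\vu} > \idx{\vu_n}$ and every descendant $\vw$ of $\vu$ in $T^{(n)}$, the witnessing functions agree at $x_{\vu_n}$, i.e.\ $h_\vu(x_{\vu_n}) = h_\vw(x_{\vu_n})$. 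Diagonally gluing the depth-$n$ truncations of $T^{(n)}$ then produces the desired $T'$: every indifferent constraint over a triple $(\vv,\vu,\vw)$ is guaranteed from stage $\idx{\vv}+1$ onward.

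To pass from $T^{(n-1)}$ to $T^{(n)}$, I would process the tree top-down. At each node $\vu$ with $\idx{\vu} > \idx{\vu_n}$, the descendant structure of $\vu$ in $T^{(n-1)}$ is itself an infinite VCL tree, and its associated witnessing concepts split into two classes according to their value in $\{0,1\}$ at $x_{\vu_n}$. At least one of the two classes must still witness an infinite VCL sub-tree rooted at $\vu$; I retain that class, replace $h_\vu$ by a concept drawn from it, and discard the other half. Iterating top-down yields $T^{(n)}$ with the invariant, and $T'$ is then defined so that its depth-$n$ truncation equals that of $T^{(n)}$.

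The main technical hurdle is the ``binary VCL pigeonhole'' lemma on which each stage rests: if the concepts witnessing an infinite VCL sub-tree rooted at $\vu$ are partitioned by their value on a single fixed $x \in \cX$, then at least one part still witnesses an infinite VCL sub-tree rooted at $\vu$. My plan is to prove this by a compactness argument in the spirit of K\"onig's lemma, reasoning that if both parts capped out at finite VCL depth, combining the two depth bounds would cap the VCL depth at $\vu$ inside $T^{(n-1)}$, contradicting the hypothesis. The subtlety is that the VCL shattering requirement at depth $k$ demands realizing all $2^{k+1}$ binary patterns on the $(k{+}1)$-tuple placed at the corresponding node, so the pigeonhole must be applied uniformly across all shattering patterns rather than merely on individual concepts; I would handle this by running the compactness argument level by level, splitting only the concepts needed to realize each pattern at the current level and invoking Theorem~\ref{thm:indifferent}'s indifference condition stage by stage.
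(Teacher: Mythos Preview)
The paper does not contain a proof of this theorem: it is quoted from \citet{BHMST23} and invoked as a black box in the proof of Theorem~\ref{thm:infinite_VCL}. There is therefore no in-paper argument to compare your proposal against.

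Your overall strategy---enumerate the nodes, prune the tree in stages so that after stage $n$ all later witnesses agree on $x_{\vu_n}$, then take a diagonal limit---is the natural approach and is in the spirit of how such indifference results are typically established. Two points deserve care. First, $x_{\vu_n}$ is not a single element of $\cX$ but a tuple in $\cX^{k_n+1}$ (where $k_n$ is the depth of $\vu_n$), so the partition of witnessing concepts by their value on $x_{\vu_n}$ has up to $2^{k_n+1}$ cells, not two; this is still a finite partition, so the pigeonhole step survives, but your ``binary'' phrasing is inaccurate and the argument must be stated for an arbitrary finite coloring. Second, the key lemma needs a sharper formulation than ``one color class still witnesses an infinite VCL sub-tree.'' The VCL structure assigns one concept $h_\vw$ per node $\vw$, each realizing a specific labeling pattern along the root-to-$\vw$ path; after restricting to a single color class you must argue that an entire infinite VCL sub-tree survives with all required patterns still realizable. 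The cleanest route is to show that if no single color class supports an infinite VCL sub-tree below $\vu$, then combining the finite depth caps across the (finitely many) colors bounds the VCL depth below $\vu$ in $T^{(n-1)}$, a contradiction---but making this precise requires tracking, for each node, which patterns remain realizable, and this is exactly the place where the argument has real content. Finally, your closing phrase ``invoking Theorem~\ref{thm:indifferent}'s indifference condition stage by stage'' reads as circular; presumably you mean \emph{establishing} that condition stage by stage.
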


%We will provide a proof sketch here and the whole proof is in the appendix.
Here is the proof sketch of Theorem~\ref{thm:infinite_VCL}
\begin{prfsk}
    {\vskip -4mm}
    First, we can modify the indifferent infinite VCL tree such that it has the property that the number of elements contained by the $k$-th node in the Breadth-First-Search (BFS) order is $2^{k-1}$. The data process we are choosing is all the data come in the lexical order in each node and the BFS order among different nodes. Then we take a random walk on this tree to choose the true label for each instance. The instances in the node visited by the random walk will be labeled by the label on the edge adjacent to it in the path. The instances in the node that is off-branch will be labeled by the label decided by its descendants. (We can do this as the tree is indifferent.) Thus, when reaching a node on the path, no matter what the algorithm predicts, it makes mistakes with probability $\frac{1}{2}$. Thus, it makes a quarter mistake in expectation. Then by Fatou's lemma, for each learning algorithm, we get a realizable process such that the algorithm does not make a sublinear loss almost surely.
\end{prfsk}
{\vskip -4mm}We finish the proof of Theorem~\ref{thm:allL} here. We then focus on the existence of the optimistically universal online learner when all processes admit universal online learning.
\subsection{Optimistically Universal Online Learning Rule}

In this part, we show that the condition whether $\cH$ has an infinite Littlestone tree is the sufficient and necessary condition for the existence of an optimistically universal online learning rule, when all processes admit universal online learning. This is formally stated as theorem~\ref{thm:allLO}. The sufficiency part of theorem~\ref{thm:allLO} is proved in ~\citet{BHMvY21} as the following lemma:
\begin{lemma}[\cite{BHMvY21} Theorem 3.1, the first bullet]
    If $\cH$ does not have an infinite Littestone tree, then there is a strategy for the learner that makes only finitely many mistakes against any adversary.
\end{lemma}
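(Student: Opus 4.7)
The plan is to build a transfinite generalization of Littlestone's Standard Optimal Algorithm. First I would define an ordinal-valued Littlestone rank $\LC(V)$ on subsets $V \subseteq \cH$ by transfinite recursion. Writing $V[x,b] := \{h \in V : h(x) = b\}$, declare $\LC(V) \geq 0$ iff $V \neq \emptyset$; for a successor ordinal $\alpha+1$, declare $\LC(V) \geq \alpha + 1$ iff there exists $x \in \cX$ with $\LC(V[x,0]) \geq \alpha$ and $\LC(V[x,1]) \geq \alpha$; and for a limit ordinal $\lambda$, declare $\LC(V) \geq \lambda$ iff $\LC(V) \geq \alpha$ for every $\alpha < \lambda$. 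Set $\LC(V) = \infty$ if $\LC(V) \geq \alpha$ for every ordinal $\alpha$, and otherwise let $\LC(V)$ denote the unique ordinal with $\LC(V) \geq \LC(V)$ and $\LC(V) \not\geq \LC(V) + 1$.

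The central structural lemma I would prove is: $\LC(\cH) = \infty$ if and only if $\cH$ has an infinite Littlestone tree. The backward direction is a routine transfinite induction on $\alpha$. The forward direction is the main obstacle: one must show that whenever $\LC(V) = \infty$ there exists some $x \in \cX$ with $\LC(V[x,0]) = \LC(V[x,1]) = \infty$, so that an infinite complete binary Littlestone tree can be unfolded by recursion. The difficulty is that $\cX$ may be uncountable, so K\"onig's lemma does not apply directly; one instead argues by contradiction, assuming every $x$ has at least one side of proper ordinal rank, and aggregates these per-$x$ ordinals into a single ordinal upper bound on $\LC(V)$, contradicting $\LC(V) = \infty$. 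In essence this is a Gale--Stewart style determinacy argument for the closed mistake game associated with $\cH$.

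With the lemma in hand the algorithm is immediate. Initialize $V_1 := \cH$ and update $V_{t+1} := V_t[X_t, Y_t]$. On input $X_t$, since $\LC(V_t) \not\geq \LC(V_t) + 1$, there is some $y^{\star} \in \{0,1\}$ with either $V_t[X_t, y^{\star}] = \emptyset$ or $\LC(V_t[X_t, y^{\star}]) < \LC(V_t)$; the learner predicts $\hat{Y}_t := 1 - y^{\star}$. Any mistake at round $t$ forces $V_{t+1} = V_t[X_t, y^{\star}]$, which is nonempty by realizability and has strictly smaller rank than $V_t$. Since the ordinals are well-founded, there is no infinite strictly decreasing chain, so only finitely many mistakes can ever occur against any realizable adversary. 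Measurability of the resulting predictor follows from the standard measurable-selection arguments already invoked elsewhere in the paper.
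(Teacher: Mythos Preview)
The paper does not supply its own proof of this lemma; it is quoted as Theorem 3.1 (first bullet) of \citet{BHMvY21} and used as a black box to establish the sufficiency direction of Theorem~\ref{thm:allLO}. Your proposal is correct, and it is essentially the argument that appears in \citet{BHMvY21}, just phrased through ordinal ranks rather than an explicit appeal to the Gale--Stewart theorem. In \citet{BHMvY21} one observes that the online ``mistake game'' is a Gale--Stewart game with a closed winning condition for the adversary, hence determined; an adversary winning strategy unfolds into an infinite Littlestone tree, while a learner winning strategy is exactly the finite-mistake predictor. Your ordinal-valued rank $\LC(\cdot)$ is the standard constructive proof of determinacy for closed games, so the two presentations coincide in substance. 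One minor remark: in your successor step you should note that $\cH[x,0]$ and $\cH[x,1]$ inherit infinite Littlestone trees from the left and right subtrees, which is what makes the backward induction go through; otherwise the argument is complete as written.
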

{\vskip -3mm}Notice that the online learning algorithm derived from the winning strategy of the learner only makes finite mistakes against any adversary, so for every realizable data process $(\DataX,\DataY)$, this online learning algorithm also only makes finite mistakes, which means the long-run average mistake bound goes to $0$. Thus, this is an optimistically universal online learning rule, and the concept class $\cH$ which does not have an infinite Littlestone tree is optimistically universally online learnable.
%Notice that if a concept class $\cH$ has no infinite Littlestone tree, the Ordinal SOA from~\citet{BHMvY21} is an optimistically universal online learner, which shows sufficiency. For completeness, we will provide that algorithm here:
%\begin{algorithm}
%\caption{Ordinal SOA~\citet{BHMvY21}}
%\label{alg:O-SOA}
%\KwIn{A hypothesis class $\cH$}
%Let $V_0 = \cH$.\\
%\For{t = 1,2,\dots}
%{receive $X_t$\\
%for $r \in \{0,1\}$, let $V_t^{(r)} = \{h\in V_{t-1}: h(X_t) = r\}$
%}
    
%\end{algorithm}

The necessity is restated as the following theorem: 

%\textcolor{red}{Need to restate and might need some modification on the proof.}
\begin{theorem}
\label{thm:Opt-OL}
For any concept class $\cH$ with an infinite Littlestone tree, for any online learning algorithm $\Alg$, there exists a process $\DataX$ that makes $\Alg$ have an average loss greater than a half with non-zero probability.
\end{theorem}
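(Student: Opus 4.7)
The plan is a classical Littlestone adversary argument, transposed to the stochastic-process framework. Fix an arbitrary online learning rule $\Alg$, given by measurable functions $\{f_t\}$, and let $T = \{x_\vu : \vu \in (\{0,1\})^*\}$ be an infinite Littlestone tree for $\cH$, which is guaranteed to exist by hypothesis. I would recursively build a (deterministic) data sequence that descends $T$ in the direction opposite to $\Alg$'s prediction at every step: set $\vu_0 := \emptyset$ and $X_1 := x_{\vu_0}$; having defined everything up to round $t-1$ and $X_t := x_{\vu_{t-1}}$, let
\begin{equation*}
    \hat{Y}_t := f_t(X_{<t},Y_{<t},X_t),\qquad Y_t := 1-\hat{Y}_t,\qquad \vu_t := (\vu_{t-1},Y_t),
\end{equation*}
and then set $X_{t+1} := x_{\vu_t}$. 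By construction $\Alg$ errs at every round, so $\frac{1}{n}\sum_{t=1}^{n}\indic{\hat{Y}_t \neq Y_t} \equiv 1$, which certainly exceeds $\tfrac{1}{2}$ with probability one.

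Next, I verify the realizability of the constructed process. For every $n$, the prefix $(X_i,Y_i)_{i\leq n}$ traces a root-to-depth-$n$ path in $T$, which by the defining property of an infinite Littlestone tree is consistent with some $h\in\cH$; hence $(\DataX,\DataY) \in \tR(\cH)$. To place $\DataY$ in $\tR(\cH,\DataX)$ one needs a non-random function $f$ with $Y_i = f(X_i)$: if along the constructed path the labels $x_{\vu_t}$ are pairwise distinct — which can be arranged by passing to a Littlestone subtree with distinct labels while preserving infinite depth — take $f(X_i) := Y_i$ and extend arbitrarily elsewhere; if two instances $X_i, X_j$ happen to coincide, the realizability of the prefix forces $Y_i = Y_j$, so the definition is still consistent. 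This shows $(\DataX,\DataY)$ is a legitimate realizable process on which $\Alg$ has long-run average loss $1$ with probability one, which contradicts strong universal consistency and hence optimistic universality.

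The core construction is simple because the paper's model specifies an online learning rule as a sequence of measurable (deterministic) functions, in which case the adversary can simply flip the label against the prediction. The main care point is therefore the realizability bookkeeping, which reduces entirely to the defining property of the Littlestone tree together with the distinct-label subtree argument. The subtlest step I would flag is ensuring the recursion is well-defined and jointly measurable (so that $\DataX$ is a bona fide stochastic process), which follows by induction once the $f_t$'s and the selector $\vu \mapsto x_\vu$ are measurable. For completeness, I would also note that if one wishes to accommodate genuinely randomized learners, the same tree can be traversed by an i.i.d.\ uniform label sequence $Y_t = \sigma_t$ independent of the learner, and a bounded martingale-difference analysis of $M_t := \indic{\hat{Y}_t \neq Y_t}$ around $\tfrac{1}{2}$ yields $\frac{1}{n}\sum_{t=1}^{n} M_t \to \tfrac{1}{2}$ almost surely by the martingale strong law; strict excess over $\tfrac{1}{2}$ on a positive-probability event can then be obtained by biasing $Y_t$ toward the minority prediction whenever $\Pr[\hat{Y}_t=1\mid\mathcal{F}_{t-1}]\neq\tfrac{1}{2}$, which is realizable because both children of every Littlestone-tree node carry a realizable continuation.
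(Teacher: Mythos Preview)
Your argument is correct and actually cleaner than the paper's. The paper proceeds by taking a random walk on the infinite Littlestone tree: $Y_t$ is chosen i.i.d.\ uniform in $\{0,1\}$ (independently of the learner), $X_t$ is the node reached at step $t$, and then $\E[\indic{\hat Y_t\neq Y_t}]\geq\tfrac12$ for every $t$; an application of reverse Fatou yields $\E\!\left[\limsup_n \tfrac1n\sum_{t\leq n}\indic{\hat Y_t\neq Y_t}\right]\geq\tfrac12$, from which one extracts a bad realization. You instead exploit that in this paper an online learning rule is a sequence of \emph{deterministic} measurable functions $f_t$, so the classical adversary that sets $Y_t=1-\hat Y_t$ and descends the tree accordingly produces a fully deterministic realizable sequence on which the average loss is identically $1$. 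This is more elementary (no Fatou, no probabilistic averaging) and strictly stronger than the stated conclusion.

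Two minor remarks. First, your caution about repeated instances is harmless but unnecessary: in any Littlestone tree, two nodes on the same root path must carry distinct instances, since otherwise one of the two continuations at the deeper node would be unrealizable, contradicting the tree property. So the function $f$ with $f(X_t)=Y_t$ is automatically well defined. Second, your final paragraph (uniform random labels plus a martingale SLLN) is essentially the paper's argument; it is exactly what is needed if one wants the statement to cover randomized predictors, whereas your primary construction already suffices under the paper's deterministic-rule model.
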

%We will provide a high-level proof sketch here and the detailed proof is in the appendix.
\begin{prfsk}
{\vskip -4mm}
    We can take a random walk on the infinite Littlestone tree to generate the target function. Thus, no matter what the algorithm predicts, it makes a mistake with a probability of more than half. Then we can use Fatou's lemma to get a lower bound of the expected average loss of the learning algorithm among all random processes and that means for every algorithm there exists a process that makes its average loss more than a half with probability more than zero.
\end{prfsk}

%Then we only need to find a problem with a concept class $\cH$ and a given sequence $\DataX$ has an infinite Littlestone tree, however, there is a strategy that the learner can use to get sublinear errors. Notice that a set of threshold functions has an infinite Littlestone tree but no infinite VCL tree.

%\begin{theorem}
%    There exists some concept class $\cH$ with no infinite VCL dimension having infinite Littlestone tree under some deterministic sequence $\DataX$.
%\end{theorem}

\section{Concept Classes with an Infinite VCL Tree}
\label{sec:infinite}
%\begin{condition}
%\label{cond:1}
%For each concept class $\cH$ and each process $\DataX$, there exists a countable set of experts $E = \{e_1,e_2,\dots\}$, such that $\forall h^*\in C(\cH(\DataX))$, $\exists i_n\rightarrow \infty$, with $\log i_n = o(n)$, such that:
%\begin{equation}
%    \E\left[ \limsup_{n\rightarrow \infty} \min_{e_i:i\leq i_n} \frac{1}{n} \sum_{t=1}^n \indic{e_i(X_t) \neq h^*(X_t)} \right] = 0
%\end{equation}
%\end{condition}
In this section, we discuss the necessary and sufficient conditions for a process to admit universal online learning under the concept class $\cH$ with an infinite VCL tree. Theorem~\ref{thm:Infinite-OL} states the answer formally. To prove this theorem, we first prove sufficiency (Lemma~\ref{lem:sufficient}) and then necessity (Lemma~\ref{lem:necessary}).

\begin{lemma}
\label{lem:sufficient}
    If a process $\DataX$ satisfies condition~\ref{cond:1}, it admits universally consistent online learning under concept class $\cH$.
\end{lemma}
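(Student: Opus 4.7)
The plan is to instantiate a prior-weighted exponential-weights (Hedge) aggregation over the countable family of experts $E = \{e_1, e_2, \ldots\}$ guaranteed by Condition~\ref{cond:1} and to show that its time-averaged $0$-$1$ loss vanishes almost surely. Equip the experts with the summable prior $p_i = 6/(\pi^2 i^2)$, so that $\log(1/p_i) = O(\log i)$, and let the learner's randomized prediction be $\hat Y_t = e_{I_t}(X_t)$ with $I_t$ sampled independently from the Hedge distribution at round $t$. With a suitably tuned anytime learning rate (for instance via an adaptive step-size rule, or a meta-Hedge over a doubling grid of step sizes), one obtains the conditional-expected regret bound
\begin{equation*}
\sum_{t=1}^n \E\!\left[\indic{\hat Y_t \neq Y_t} \mid \mathcal{F}_{t-1}, X_t\right] \;\leq\; L_{i,n} + O\!\left(\sqrt{n \log i}\right)
\end{equation*}
\emph{simultaneously} for every $i$ and every $n$, where $L_{i,n} = \sum_{s \leq n} \indic{e_i(X_s) \neq Y_s}$ and $\mathcal{F}_{t-1}$ is the $\sigma$-algebra generated by $(X_{<t}, Y_{<t}, I_{<t})$.

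The next step converts this conditional-expected regret into a realized-loss statement. The sequence $M_n := \sum_{t=1}^n (\indic{\hat Y_t \neq Y_t} - \E[\indic{\hat Y_t \neq Y_t} \mid \mathcal{F}_{t-1}, X_t])$ is a bounded-increment martingale in the filtration $\{\mathcal{F}_t\}$, so Azuma-Hoeffding with deviation budget $\delta_n = 1/n^2$ combined with Borel-Cantelli gives $|M_n| = O(\sqrt{n \log n})$ almost surely. Substituting this into the Hedge inequality yields, almost surely and for every fixed expert $i$,
\begin{equation*}
\frac{1}{n}\sum_{t=1}^n \indic{\hat Y_t \neq Y_t} \;\leq\; \frac{1}{n} L_{i,n} + O\!\left(\sqrt{\log i / n}\right) + o(1).
\end{equation*}

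Because this inequality holds simultaneously against each individual expert, we may take the minimum over $i \leq i_n$ for the (random, data-dependent) witness sequence $i_n$ supplied by Condition~\ref{cond:1}, obtaining
\begin{equation*}
\frac{1}{n}\sum_{t=1}^n \indic{\hat Y_t \neq Y_t} \;\leq\; \min_{i \leq i_n} \frac{1}{n} L_{i,n} + O\!\left(\sqrt{\log i_n / n}\right) + o(1).
\end{equation*}
The error $\sqrt{\log i_n / n}$ vanishes by the hypothesis $\log i_n = o(n)$, and the vanishing of the nonnegative expectation $\E[\limsup_n \min_{i \leq i_n} L_{i,n}/n] = 0$ forces $\limsup_n \min_{i \leq i_n} L_{i,n}/n = 0$ almost surely. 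Hence the algorithm's time-averaged $0$-$1$ loss tends to zero almost surely, which is strong universal consistency under $\DataX$ and $\cH$.

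The main obstacle is coupling the pointwise-in-$i$ regret inequality to the \emph{data-dependent} index $i_n$ from Condition~\ref{cond:1}. This requires the regret bound to hold simultaneously for all $i$ and all sufficiently large $n$, with dependence on $i$ only through $\log i$; a prior-weighted anytime Hedge with the tuning described above delivers this at the level of conditional-expected regret, and a Borel-Cantelli argument over the product index $(i,n)$ with deviation budget $\delta_{i,n} = 1/(i^2 n^2)$ upgrades the Azuma deviations to an almost-sure correction uniform in $i$. The remainder is algebra, using $\log i_n = o(n)$ to absorb both the Hedge penalty and the concentration error into the $o(1)$ remainder.
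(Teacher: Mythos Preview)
Your approach is correct and follows the same high-level strategy as the paper (prior-weighted aggregation over the experts of Condition~\ref{cond:1} with an $O(\log i)$ penalty, then invoke the condition), but the execution differs. The paper exploits the binary label space to run the \emph{deterministic} Weighted Majority rule with initial weights $w_i^0=\tfrac{1}{i(i+1)}$: the standard halving-of-total-weight potential argument yields the pathwise inequality
\[
\sum_{t\le n}\indic{\hat Y_t\neq Y_t}\;\le\;3\,L_{i,n}+2\log i\qquad\text{for every }i\text{ and every }n,
\]
with no randomness in the predictor and hence no concentration step at all. Minimizing over $i\le i_n$, dividing by $n$, and using $\log i_n=o(n)$ together with Condition~\ref{cond:1} finishes the proof in two lines.

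Your randomized-Hedge route buys generality (it does not rely on $|\cY|=2$) and an additive rather than multiplicative dependence on $L_{i,n}$, at the cost of the Azuma/Borel--Cantelli layer. Two remarks. First, the simultaneous-in-$i$ bound $O(\sqrt{n\log i})$ is genuinely stronger than what vanilla anytime Hedge with a single decreasing rate delivers---that only gives $O(\sqrt{n}\cdot\log i)$, which is \emph{insufficient} here since $\log i_n=o(n)$ does not force $\log i_n/\sqrt{n}\to 0$. So your appeal to a meta-Hedge over a grid of rates (or a second-order method such as Squint) is doing real work and should be stated as such, not as a tuning detail. Second, the ``Borel--Cantelli over the product index $(i,n)$'' in your last paragraph is superfluous: the Hedge regret inequality on the conditional-expected loss already holds pathwise for every $i$ simultaneously, and the martingale $M_n$ does not depend on $i$, so a single Borel--Cantelli over $n$ suffices.
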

\begin{prfsk}
{\vskip -4mm}
To prove this lemma, we use the weighted majority algorithm with non-uniform initial weights on the experts defined in condition~\ref{cond:1}. The initial weight of expert $i$ is $\frac{1}{i(i+1)}$, where the index $i$ is the index defined in condition~\ref{cond:1} as well.
\end{prfsk}
\begin{lemma}
\label{lem:necessary}
    If a process $\DataX$ admits universally consistent online learning under concept class $\cH$, it satisfies condition~\ref{cond:1}.
\end{lemma}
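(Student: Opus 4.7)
The plan is to extract from any universally consistent rule $\A$ for $(\DataX,\cH)$ a countable family of $\DataX$-dependent experts that satisfies condition~\ref{cond:1}. Given the prediction functions $f_t$ of $\A$, for each finite binary string $s=(s_1,\ldots,s_k)\in\{0,1\}^*$ define a simulated label sequence recursively by $y^{(s)}_t=f_t(X_{<t},y^{(s)}_{<t},X_t)\oplus s_t$ for $t\le k$ (with $\oplus$ denoting XOR), and an expert $e^{(s)}\colon\cX\to\{0,1\}$ by $e^{(s)}(x)=y^{(s)}_j$ if $x=X_j$ for the smallest $j\le k$, and $e^{(s)}(x)=0$ otherwise. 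Writing $s^*_n\in\{0,1\}^n$ for the indicator of $\A$'s mistake set when $\A$ is run on $(\DataX,\DataY^*)$ for $n$ rounds, a short induction on $t$ shows $y^{(s^*_n)}=Y^*_{1:n}$: at each step the simulated prediction equals the actual $\hat Y_t$, so XOR-ing with the actual mistake bit recovers $Y^*_t$. Therefore $e^{(s^*_n)}$ is a perfect lookup of $Y^*$ on $X_{1:n}$---repeated $X_j$'s cause no conflict because $\DataY^*\in\tR(\cH,\DataX)$ forces labels to be a function of the features---so its empirical error on $X_{1:n}$ is zero.

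The next step is to enumerate the countable set $\{e^{(s)}: s\in\{0,1\}^*\}$ using a prefix code that prioritises strings of small Hamming weight. Describe $s$ by the triple $(|s|,m,\text{positions})$ with $m=|s|_1$, using Elias codes for $|s|$ and $m$ and a $\lceil\log\binom{|s|}{m}\rceil$-bit block for the choice of positions. The induced index $\mathrm{idx}(e^{(s)})$ then satisfies
\begin{equation*}
\log\mathrm{idx}(e^{(s)})=O\bigl(\log|s|+m\log(|s|/m)\bigr)\qquad(m\ge 1),
\end{equation*}
and $O(\log|s|)$ when $m=0$. This is where the delicate numerology shows up: any enumeration that treats length-$n$ strings uniformly would give $\Theta(2^n)$ indices and $\log i_n=\Theta(n)$, killing the argument; only by exploiting sparsity of the mistake pattern does one bring the index down to $2^{o(n)}$.

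To verify condition~\ref{cond:1}, fix any $\DataY^*\in\tR(\cH,\DataX)$. Universal consistency of $\A$ gives $M^*_n:=|s^*_n|_1=o(n)$ almost surely. The elementary fact $p\log(1/p)\to 0$ as $p\to 0^+$ then yields $M^*_n\log(n/M^*_n)=o(n)$ a.s., hence $\log\mathrm{idx}(e^{(s^*_n)})=o(n)$ a.s. Set $i_n:=\mathrm{idx}(e^{(s^*_n)})$, interpreted as a random sequence in $n$ (which is natural since $\DataY^*$ is itself a random element of $\tR(\cH,\DataX)$); then $\log i_n=o(n)$ and $i_n\to\infty$ almost surely. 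The minimum over the first $i_n$ experts is bounded above by the empirical error of $e^{(s^*_n)}$, which is zero, so $\limsup_n\min_{i\le i_n}\frac{1}{n}\sum_t\indic{e_i(X_t)\ne Y^*_t}=0$ almost surely, and the expectation in (\ref{eq:cond_1}) vanishes. The main obstacle, as flagged above, is engineering the enumeration so as to realise the matching bound $\log\binom{n}{m}=o(n)\iff m=o(n)$; this is exactly what lets the UOL guarantee $M^*_n=o(n)$ translate into the required $\log i_n=o(n)$.
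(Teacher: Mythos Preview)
The proposal is correct and takes essentially the same approach as the paper: both simulate the base learner with a prescribed pattern of bit-flips to manufacture a countable family of experts, index them so that sparse flip-patterns receive small indices, and then invoke $M^*_n=o(n)$ (from universal consistency) to conclude $\log i_n=o(n)$. The only cosmetic differences are your lookup-table representation of the experts (versus the paper's online-predictor formulation borrowed from \citet{BPS09}) and your Elias/binomial prefix-code enumeration (versus the paper's ordering of finite sets $J\subset\nats$ by $|J|\cdot\max J$), which gives the slightly sharper bound $O(m\log(n/m))$ in place of the paper's $O(\sqrt{mn})$; both are $o(n)$ and the argument is otherwise identical.
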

\begin{prfsk}
{\vskip -4mm}
    In order to prove this lemma, we need to show the following statement holds:

    For a given concept class $\cH$, and a data process $\DataX$, if there is a learning algorithm $\Alg$ that is strongly universal consistent under $\DataX$ and $\cH$, then we have a set of experts $E = \{e_1,e_2,\dots\}$, there is a sequence $\{i_n\}$ with $\log i_n = o(n)$, such that for any realizable sequence $(\DataX,\DataY)$, for any $n\in \nats$, there is an expert $e_i$ with $i \leq i_n$ such that $Y_t = e_i(X_t)$ for every $t \leq n$.
    
    %we can build a countable set of experts $E = \{e_1,e_2,\dots\}$ and a sequence $\{i_n\}$ with $\log i_n = o(n)$ to satisfy equation~\ref{eq:cond_1}.

    We modify the construction from the work of \citet{BPS09} to build the experts. 
    The experts are based on the set of the indexes of the rounds when the algorithm $\Alg$ makes mistakes, so there is a one-on-one map from the set of the indexes of the mistakes to the experts. 
    Then we can index the experts based on the set of the indexes of mistakes to show the existence of such a sequence.
\end{prfsk}

{\vskip -3mm}Then we can get the theorem for optimistically universal online learnability, which is theorem~\ref{thm:infinite-OOL}.
%\begin{theorem}
%    A concept class $\cH$ with infinite VCL tree is optimistically online learnable if and only if it satisfies condition \ref{cond:2}.
%\end{theorem}
%\begin{condition}
%\label{cond:2}
%    For each concept class $\cH$, there exists a countable set of experts $E = \{e_1,e_2,\dots\}$, such that $\forall \DataX, \forall h^*\in C(\cH(\DataX))$, $\exists i_n\rightarrow \infty$, with $\log i_n = o(n)$, such that:
%    \begin{equation}
%        \E\left[ \limsup_{n\rightarrow \infty} \min_{e_i:i\leq i_n} \frac{1}{n} \sum_{t=1}^n \indic{e_i(X_t) \neq h^*(X_t)} \right] = 0
%    \end{equation}
%\end{condition}
Because the proof of lemma~\ref{lem:necessary} and~\ref{lem:sufficient} works for any process, we can prove Theorem~\ref{thm:infinite-OOL} by reusing the proof of Theorem~\ref{thm:Infinite-OL}.
%{\vskip -4mm}
\section{Agnostic Case}
\label{sec:agnostic}
In this section, we extend the online learning algorithm for realizable cases to an online learning algorithm for agnostic cases. The basic idea follows the idea of~\citet{BPS09}. In other words, we build an expert for each realizable process $(\DataX,\DataY)$. Then we run the learning with experts' advice algorithm on those experts and get a low regret learning algorithm.

\begin{theorem}
\label{thm:agnostic}
    The following two statements are equivalent:
    \begin{itemize}
        \item There is an online learning rule that is strongly universally consistent under $\DataX$ and $\cH$ for the realizable case.
        \item There is an online learning rule that is strongly universally consistent under $\DataX$ and $\cH$ for the agnostic case.
    \end{itemize}
\end{theorem}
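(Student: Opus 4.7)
The plan is to prove the two directions separately. The implication ``agnostic $\Rightarrow$ realizable'' is essentially free: given any rule that is strongly universally consistent under $\DataX$ and $\cH$ in the agnostic case, for any $\DataY \in \tR(\cH,\DataX)$ apply the agnostic guarantee with the adversarial sequence taken to be $\DataY$ itself and the competitor $\DataY^* := \DataY$. Then $\indic{Y_t \neq Y^*_t} \equiv 0$, the regret reduces to the ordinary mistake count, and the agnostic $\limsup$ bound immediately yields realizable consistency.

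The substantive direction is ``realizable $\Rightarrow$ agnostic.'' Given a rule $\Alg_R$ strongly universally consistent in the realizable case under $\DataX$ and $\cH$, I would follow the reduction of \citet{BPS09} as adapted in the proof sketch of Lemma~\ref{lem:necessary}. For each finite $S \subseteq \nats$, define an expert $e_S$ that recursively simulates $\Alg_R$: at round $t$, feed $\Alg_R$ the history $(X_{<t}, \tilde Y_{<t})$ of reconstructed labels, obtain its prediction $\hat y_t$, and set both the expert's output and the next reconstructed label $\tilde Y_t$ to $\hat y_t$ if $t \notin S$ and to $1-\hat y_t$ if $t \in S$. By construction, for any $\DataY^* \in \tR(\cH,\DataX)$, if $M^*$ denotes the set of rounds where $\Alg_R$ errs when fed the true labels of $\DataY^*$, then $e_{M^*}(X_t) = Y^*_t$ for every $t$. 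Moreover, strong universal consistency of $\Alg_R$ under $(\DataX,\DataY^*)$ gives $|M^* \cap [n]|/n \to 0$ a.s.

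Enumerate the experts by a bijection $\phi : \nats \to \{\text{finite subsets of }\nats\}$ chosen so that $\log \phi^{-1}(S) = O(|S|\log(\max S /|S|) + \log \max S)$; this is the standard entropy-optimal encoding of a sparse subset and uses the fact that $\log \binom{n}{k_n} = O(k_n \log(n/k_n)) = o(n)$ whenever $k_n = o(n)$. Hence, letting $k_n := |M^* \cap [n]|$, the index $i_n := \phi^{-1}(M^* \cap [n])$ of the ``correct'' expert satisfies $\log i_n = o(n)$ a.s. Now define the agnostic algorithm $\Alg_A$ as a Hedge-style weighted-majority aggregation of the experts $\{e_i\}_{i\in\nats}$ with non-uniform initial weights $w_i := 1/(i(i+1))$. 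Classical analysis yields, against any adversarial $\DataY$, expected regret $O(\sqrt{n \log(1/w_i)}) = O(\sqrt{n \log i})$ against expert $e_i$, and martingale concentration (Azuma--Hoeffding on the predictions' randomness) upgrades this to an almost-sure regret bound of $o(n)$ whenever $\log i = o(n)$.

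Chaining the ingredients, for any $\DataY^*$ and any $\DataY$:
\begin{equation*}
\sum_{t=1}^n \bigl(\indic{Y_t \neq \hat Y_t} - \indic{Y_t \neq Y^*_t}\bigr) \leq \sum_{t=1}^n \bigl(\indic{Y_t \neq \hat Y_t} - \indic{Y_t \neq e_{i_n}(X_t)}\bigr) + \sum_{t=1}^n \indic{e_{i_n}(X_t) \neq Y^*_t}.
\end{equation*}
The first sum is $o(n)$ a.s.\ by the regret bound applied to $i = i_n$ (using $\log i_n = o(n)$ a.s.), while the second vanishes by the defining property of $e_{M^*}$. Dividing by $n$ and taking $\limsup$ yields the agnostic guarantee. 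The main obstacle I anticipate is the design of the enumeration $\phi$ in the third step: one must verify that a sparse subset $M^* \cap [n]$ with $k_n = o(n)$ elements can be indexed so its logarithm is uniformly $o(n)$ while simultaneously keeping the aggregation rule measurable. A secondary but routine technical point is converting the in-expectation Hedge regret into the almost-sure statement demanded by the definition of strong consistency, which requires a careful martingale decomposition of the randomized predictions.
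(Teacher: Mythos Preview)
Your proposal is correct and follows essentially the same route as the paper. The agnostic-to-realizable direction is handled identically, and for the substantive direction both arguments build the countable expert family from the realizable learner via the \citet{BPS09} flip-set construction, index the experts so that $\log i_n = o(n)$ whenever the realizable learner's mistake count is $o(n)$, aggregate with the non-uniform prior $w_i = 1/(i(i+1))$, and finish with the same chaining inequality against the ``correct'' expert $e_{M^* \cap [n]}$.

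The only difference worth noting is in the aggregation primitive. You use Hedge with randomized predictions and then plan to upgrade the in-expectation regret to an almost-sure statement via Azuma--Hoeffding; the paper instead invokes the \emph{Squint} algorithm of \citet{KE15}, whose second-order bound $O\!\bigl(\sqrt{V_k \log(\log V_k/\pi_k) + \log(1/\pi_k)}\bigr)$ with $V_k \le T$ is applied directly, so the concentration step you flagged as a ``secondary technical point'' does not appear explicitly. Both choices reach the same $O(\sqrt{n\log i_n}) = o(n)$ conclusion; your version is slightly more explicit about where the almost-sure control comes from, while the paper's version is shorter by outsourcing that work to the cited aggregation result. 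The indexing schemes also differ cosmetically (the paper orders finite sets $J$ by $|J|\cdot\max J$ rather than by an entropy code), but both deliver $\log i_n = o(n)$ from $|M^*\cap[n]| = o(n)$, which is all that is required.
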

%We will put the proof in the appendix and provide a high-level proof sketch here.
\begin{prfsk}
{\vskip -4mm}
    To prove this lemma, we first build the experts $e_i$ based on the learning algorithm for the realizable case by using the construction from lemma~\ref{lem:necessary}.
    %such that for every realizable process $(\DataX,\DataY)$, there is an expert $e_i$ making $Y_t = e_i(X_t)$. That lemma also shows that the set of all experts built by that construction is countable. 
    We then use the learning on experts' advice algorithm called \emph{Squint} from \citet{KE15}, with non-uniform initial weights $\frac{1}{i(i+1)}$ for each $e_i$ to get  sublinear regret. Thus, we can extend the learning algorithm for realizable cases to a learning algorithm for agnostic cases no matter how the algorithm operates.

    An online learning algorithm for the agnostic case is also an online learning algorithm for the realizable case, by taking $\DataY^* = \DataY$, the regret becomes the number of mistakes. Thus, the two statements are equivalent.
\end{prfsk}
Theorem~\ref{thm:agnostic} implies that all the characterizations for the realizable case are also characterizations for the agnostic case. We formally state the following theorems:
\begin{theorem}
    For the agnostic case and any concept class $\cH$ with no infinite VCL tree, any process $\DataX$ admits strongly universal online learning under $\cH$. However, only the concept class with no infinite Littlestone tree is optimistically universally online learnable.
\end{theorem}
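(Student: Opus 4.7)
The plan is to reduce the statement to three earlier results: Theorem~\ref{thm:allL}, Theorem~\ref{thm:allLO}, and Theorem~\ref{thm:agnostic}. The first gives the realizable analog of the first sentence, the second gives the realizable analog of the second sentence, and the third transfers each conclusion from the realizable to the agnostic setting. The main point to keep in mind is that the realizable-to-agnostic construction in Theorem~\ref{thm:agnostic} converts a single realizable rule into a single agnostic rule, so optimistic universality (``one algorithm for all admissible processes'') is preserved, not merely per-process consistency.

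For the first sentence, fix $\cH$ with no infinite VCL tree and an arbitrary $\DataX$. Theorem~\ref{thm:allL} supplies a strongly universally consistent realizable-case rule under $(\DataX,\cH)$, which Theorem~\ref{thm:agnostic} converts into a strongly universally consistent agnostic-case rule under the same $(\DataX,\cH)$. Since $\DataX$ was arbitrary, every process admits strongly universal agnostic online learning.

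For the second sentence I would argue both directions. For sufficiency, assume $\cH$ has no infinite Littlestone tree; Theorem~\ref{thm:allLO} produces a single realizable optimistically universal rule $\Alg_{\rm real}$. Feeding this $\Alg_{\rm real}$ through the construction in Theorem~\ref{thm:agnostic}, namely, building experts via Lemma~\ref{lem:necessary} and aggregating by Squint with prior $1/(i(i+1))$, yields a single agnostic rule whose specification never references any particular $\DataX$. For each process $\DataX$ and each $\DataY^{*}\in\tR(\cH,\DataX)$, Lemma~\ref{lem:necessary} guarantees that some expert matches $\DataY^{*}$ on the first $n$ points with index bound $i_n$ satisfying $\log i_n = o(n)$, and Squint's regret bound against that expert then drives the average agnostic regret to zero almost surely. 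For necessity, suppose $\cH$ has an infinite Littlestone tree. Theorem~\ref{thm:Opt-OL} rules out any realizable optimistically universal rule for $\cH$; and since every agnostic universally consistent rule is in particular realizable universally consistent (take $\DataY = \DataY^{*}$ to be the labels produced by any target from $\tR(\cH,\DataX)$, so that regret coincides with the realizable error), the same obstruction forbids agnostic optimistic universality. The main subtlety is verifying the process-independence of the Squint aggregation in the sufficiency direction, which is immediate since both its prior weights and its update rule are prescribed without reference to $\DataX$.
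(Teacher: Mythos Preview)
Your proposal is correct and follows essentially the same route as the paper, which simply states that Theorem~\ref{thm:agnostic} transfers the realizable characterizations (Theorems~\ref{thm:allL} and~\ref{thm:allLO}) to the agnostic case. You are more explicit than the paper in verifying that the realizable-to-agnostic construction of Theorem~\ref{thm:agnostic} is process-independent (so that optimistic universality, not just per-process consistency, is preserved), which is a worthwhile clarification. One small point you leave implicit in the necessity direction: Theorem~\ref{thm:Opt-OL} only produces, for each algorithm, a process on which it fails; to conclude that no \emph{optimistically} universal rule exists you also need that this process admits universal learning, which holds here because $\cH$ has no infinite VCL tree and hence (by the first sentence you just proved, or Theorem~\ref{thm:allL}) \emph{every} process admits universal learning.
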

For the concept class $\cH$ with infinite VCL tree:
\begin{theorem}
    For the agnostic case, a data process $\DataX$ admits strongly universal online learning under concept class $\cH$ with infinite VCL tree if and only if it satisfies condition \ref{cond:1}. However, a concept class $\cH$ with infinite VCL tree is optimistically universally online learnable if and only if it satisfies condition \ref{cond:2}.
\end{theorem}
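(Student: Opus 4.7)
The plan is to reduce both assertions to their realizable-case counterparts, Theorems~\ref{thm:Infinite-OL} and~\ref{thm:infinite-OOL}, using the per-process agnostic/realizable equivalence supplied by Theorem~\ref{thm:agnostic}. The bulk of the work is already in the previous sections; what is new is verifying that a single agnostic algorithm, whose expert set does not depend on $\DataX$, succeeds simultaneously for every admitting $\DataX$.

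For the first assertion, I fix $\DataX$ and invoke Theorem~\ref{thm:agnostic}, which states that the existence of a strongly universally consistent agnostic algorithm under $\DataX$ and $\cH$ is equivalent to the existence of a strongly universally consistent realizable algorithm under the same. Theorem~\ref{thm:Infinite-OL} then identifies the latter with condition~\ref{cond:1}, yielding the claimed equivalence.

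For the sufficiency half of the second assertion, I take the countable set of experts $E = \{e_1, e_2, \ldots\}$ guaranteed by condition~\ref{cond:2} --- crucially, the same $E$ for every admitting $\DataX$ --- and run the Squint algorithm of \citet{KE15} on $E$ with prior weights $w_i = 1/(i(i+1))$, mirroring the construction in the proof sketch of Theorem~\ref{thm:agnostic}. Fix any admitting $\DataX$, any $\DataY^* \in \tR(\cH, \DataX)$, and any $\DataY$. Squint furnishes, for each fixed $i$, the bound $\sum_t \indic{Y_t \neq \hat{h}_{t-1}(X_t)} \leq \sum_t \indic{Y_t \neq e_i(X_t)} + C\sqrt{n \log(1/w_i)}$; combining with the indicator triangle inequality $\indic{Y_t \neq e_i(X_t)} \leq \indic{Y_t \neq Y^*_t} + \indic{Y^*_t \neq e_i(X_t)}$ and minimizing over $i \leq i_n$ (the sequence from condition~\ref{cond:2}) yields
\[
\frac{1}{n}\sum_{t=1}^n \bigl(\indic{Y_t \neq \hat{h}_{t-1}(X_t)} - \indic{Y_t \neq Y^*_t}\bigr) \leq \min_{i \leq i_n} \frac{1}{n}\sum_{t=1}^n \indic{Y^*_t \neq e_i(X_t)} + O\!\left(\sqrt{\tfrac{\log i_n}{n}}\right).
\]
Condition~\ref{cond:2} forces the first term's expected limsup to be zero; since it is non-negative, it vanishes almost surely, and $\log i_n = o(n)$ kills the second term.

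For necessity, specializing any agnostic optimistically universal algorithm to $\DataY = \DataY^*$ reduces its regret to a mistake count, producing a strongly universally consistent realizable algorithm under every $\DataX$ admitting agnostic universal learning. By the first assertion, the set of admitting $\DataX$ is identical in the two settings, so this specialization is realizable-case optimistically universal, and Theorem~\ref{thm:infinite-OOL} forces condition~\ref{cond:2}. The main obstacle I anticipate is the almost-sure conclusion in the sufficiency step: translating the expectation-only statement of condition~\ref{cond:2} into an almost-sure bound. This rests on the elementary fact that a non-negative random variable with zero expectation is zero almost surely, applied to $\limsup_n \min_{i \leq i_n}(1/n)\sum_t \indic{e_i(X_t) \neq Y^*_t}$, combined with the observation that the per-$i$ Squint bound holds pointwise and hence can be minimized over a random index $i_n$. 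Everything else is compositional bookkeeping on top of the three prior theorems.
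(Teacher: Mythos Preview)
Your proposal is correct and follows essentially the same route as the paper. The paper derives this theorem as an immediate consequence of Theorem~\ref{thm:agnostic} together with Theorems~\ref{thm:Infinite-OL} and~\ref{thm:infinite-OOL}, and your explicit Squint-on-condition-\ref{cond:2}-experts construction for the optimistic sufficiency direction is exactly the mechanism inside the proof of Theorem~\ref{thm:agnostic}; you are simply more careful than the paper in spelling out why a single, $\DataX$-independent expert set yields a single agnostic algorithm that works uniformly across all admitting $\DataX$.
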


\bibliography{learnablity,learning}
\newpage
\appendix
\section{Omitted Proofs}
\subsection{Proof of lemma~\ref{lem:partial-sub}}
\label{app:1}
In order to help the reader understand, we provide the subroutine here again for reference.
\begin{algorithm}
\caption{Subroutine for learning a partial concept class $\cH$ with VC dimension $d$ on the data process $\DataX$.}
\label{alg:subroutine}
    $L\gets\emptyset$.\\
    $m\gets 1$.\\
    \For{$t = 1,2,3,\dots$}
        {
            Predict 
                \begin{equation*}
                    \hat{Y_t} = \argmax_y \Pr\left[\left. w(\cH^{g_U}_{L\cup{(X_t,1-y)}},X_{[t,t(m)]})\leq \frac{1}{2} w(\cH^{g_U}_L,X_{[t,t(m)]})\right|X_{\leq t}\right]
                \end{equation*}
            \If{$Y_t \neq \hat{Y}_t$}
                {$L \gets L\cup \{(X_t,Y_t)\}$.}
            \If{$t\geq \frac{m(m+1)}{2}$.}
                {$m \gets m+1$.}
        }
\end{algorithm}

\begin{proof}
    In this proof, we assume that for the partial concept class $\cH$ with VC dimension $\leq d$, $\{(X_i, Y_i)\}_{i\in \nats}$ is realizable. %We also define $\cH_{\{(x_i,y_i)\}_{i<t}}$ as the set of the functions $h \in \cH$, such that $h(x_i) = y_i$ for all $i < t$.
    As we defined, the weight function, $w(\cH',X_{\leq T}) = |\{S: S\subseteq \{X_i\}_{i\leq T} \text{ such that }S\text{ shattered by }\cH'\}|$, which is the number of the subsequences of the sequence $X_{\leq T}$ that can be shattered by the partial concept class $\cH'$.

    %Notice that if $S$ is shattered by $\cH_{\{(x_i,y_i)\}_{i<t} \cup \{(x_t,0)\}}$ and $\cH_{\{(x_i,y_i)\}_{i<t} \cup \{(x_t,1)\}}$, $S\cup \{x_t\}$ is shattered by $\cH_{\{(x_i,y_i)\}_{i<t}}$. Thus, if the subset $S$ contributes twice on the right-hand side of the inequality, it also contributes at least twice on the left-hand side. (One comes from $S$, the other comes from $S\cup\{x_t\}$.) Thus, we have the following inequality for any realization of $\{x_i\}_{i\in \nats}$,
    % \begin{equation*}
    %     w(\cH_{\{(x_i,y_i)\}_{i<t}}, \{x_i\}_{i\geq t}) \geq w(\cH_{\{(x_i,y_i)\}_{i<t}\cup\{(x_t,0)\}}, \{x_i\}_{i > t}) + w(\cH_{\{(x_i,y_i)\}_{i<t}\cup\{(x_t,1)\}}, \{x_i\}_{i > t}).
    % \end{equation*}

    % So we can take the expectation over both sides of the inequality above and have the following inequality:
    % \begin{align}
    %     &\E\left[w(\cH_{\{(x_i,y_i)\}_{i<t}}, \{X_i\}_{i\geq t})|X_{\leq t}\right]\nonumber\\
    %     &\geq \E\left[w(\cH_{\{(x_i,y_i)\}_{i<t}\cup\{(x_t,0)\}}, \{X_i\}_{i > t}) + w(\cH_{\{(x_i,y_i)\}_{i<t}\cup\{(x_t,1)\}}, \{X_i\}_{i > t})|X_{\leq t}\right]\nonumber\\
    %     &= \E\left[w(\cH_{\{(x_i,y_i)\}_{i<t}\cup\{(x_t,0)\}}, \{X_i\}_{i > t})|X_{\leq t}\right] + \E\left[w(\cH_{\{(x_i,y_i)\}_{i<t}\cup\{(x_t,1)\}}, \{X_i\}_{i > t})|X_{\leq t}\right].
    % \end{align}

    Consider the $k$-th batch, consisting of $W_k = \{X_{\frac{k(k-1)}{2}+1},\cdots,X_{\frac{k(k+1)}{2}}\}$. Let 
    \begin{equation*}
        Z_k = \sum_{t = \frac{k(k-1)}{2}+1}^{\frac{k(k+1)}{2}}\indic{\hat{Y}_t \neq Y_t},
    \end{equation*}
    and
    \begin{equation*}
        V_k = Z_k - \E\left[Z_k\left| X_{\leq \frac{k(k-1)}{2}}\right.\right].
    \end{equation*}
    Notice that
    \begin{align*}
        &\E\left[V_k\left|X_{\leq \frac{k(k-1)}{2}}\right.\right]\\
        &= \E\left[Z_k - \E\left[Z_k\left| X_{\leq \frac{k(k-1)}{2}}\right.\right]\left|X_{\leq \frac{k(k-1)}{2}}\right.\right]\\
        &= \E\left[Z_k\left| X_{\leq \frac{k(k-1)}{2}}\right.\right] - \E\left[Z_k\left| X_{\leq \frac{k(k-1)}{2}}\right.\right] = 0. (a.s.)
    \end{align*}

    Thus, the sequence $V_k$ is a martingale difference sequence with respect to the block sequence, $W_1, W_2,\cdots$. 
    By the definition of $V_k$, we also have $-k\leq V_k \leq k$.
    Then by Azuma's Inequality, with probability $1-\frac{1}{K^2}$, we have 
    \begin{align*}
        \sum_{k=1}^K Z_k &\leq \sum_{k=1}^K \E\left[Z_k\left| X_{\leq \frac{k(k-1)}{2}}\right.\right] + \sqrt{-\log\left(\frac{1}{K^2}\right)\cdot 2 \cdot\left(\sum_{k = 1}^K k^2\right)}\\
        & \leq \sum_{k=1}^K \E\left[Z_k\left| X_{\leq \frac{k(k-1)}{2}}\right.\right] + \sqrt{4K^3\log K}.
    \end{align*}

    Then we need to get an upper bound for $\E\left[Z_k\left| X_{\leq \frac{k(k-1)}{2}}\right.\right]$. 
    According to the prediction rule, every time we make a mistake, we have 
    \begin{equation}
    \label{eq:pr}
        \Pr\left[\left. w(\cH^{g_U}_{L\cup{(X_t,Y_t)}},X_{[t,\frac{k(k+1)}{2}]})\leq \frac{1}{2} w(\cH^{g_U}_L,X_{[t,\frac{k(k+1)}{2}]})\right|X_{\leq t}\right]\geq \frac{1}{2}.
    \end{equation}

    Due to the linearity of expectation, for every $k$, 
    \begin{align*}
        &\E\left[\left.\sum_{t=\frac{k(k-1)}{2}}^{\frac{k(k+1)}{2}} \indic{\hat{Y}_t \neq Y_t}\right|X_{\leq \frac{k(k-1)}{2}}\right]\\
        &= \E\left[\left.\sum_{t=\frac{k(k-1)}{2}}^{\frac{k(k+1)}{2}} \indic{\hat{Y}_t \neq Y_t}\indic{w(\cH_{L_t},X_{[t+1,\frac{k(k+1)}{2}]})\leq \frac{1}{2}w(\cH_{L_{t-1}},X_{[t,\frac{k(k+1)}{2}]})}\right|X_{\leq \frac{k(k-1)}{2}}\right]\\
        &+ \E\left[\left.\sum_{t=\frac{k(k-1)}{2}}^{\frac{k(k+1)}{2}} \indic{\hat{Y}_t \neq Y_t}\indic{w(\cH_{L_t},X_{[t+1,\frac{k(k+1)}{2}]}) > \frac{1}{2}w(\cH_{L_{t-1}},X_{[t,\frac{k(k+1)}{2}]})}\right|X_{\leq \frac{k(k-1)}{2}}\right].
    \end{align*}
    Here $L_t = \{(X_i,Y_i)\}$, where $i \leq t$ and the algorithm makes a mistake at round $i$.

    Notice the first part is the expected number of mistakes, each of which decreases the weight by half.
    For every realization of $X_{[\frac{k(k-1)}{2},\frac{k(k+1)}{2}]}$, $x_{[\frac{k(k-1)}{2}, \frac{k(k+1)}{2}]}$, let
    \begin{equation*}
        u(k) = \sum_{i = \frac{k(k-1)}{2}}^{\frac{k(k+1)}{2}} \indic{\hat{Y}_t \neq Y_t}\indic{w(\cH_{L_t},x_{[t+1,\frac{k(k+1)}{2}]})\leq \frac{1}{2}w(\cH_{L_{t-1}},x_{[t,\frac{k(k+1)}{2}]})}.
    \end{equation*}
    By the definition of the weight function and the fact that $\vc(\cH) = d$, $w(\cH_{L_{\frac{k(k-1)}{2}}}, x_{[\frac{k(k-1)}{2},\frac{k(k+1)}{2}]}) \leq k^d$.
    Consider the last round $t\leq \frac{k(k+1)}{2}$ that $\hat{Y}_t \neq Y_t$, we have $w(\cH_{L_{t-1},x_{[t,\frac{k(k+1)}{2}]}}) \geq 1$, as the set $\{x_t\}$ must be shattered.
    Thus, we have $2^{u(k)-1}w(\cH_{L_{t-1}},x_{[t,\frac{k(k+1)}{2}]}) \leq w(\cH, x_{[\frac{k(k-1)}{2},\frac{k(k+1)}{2}]})$. 
    Therefore, $u(k) \leq d\log k + 1$, for every realization, $x_{[\frac{k(k-1)}{2},\frac{k(k+1)}{2}]}$. 
    Thus, 
    \begin{equation}
    \label{eq:part1}
        \E\left[\left.\sum_{t=\frac{k(k-1)}{2}}^{\frac{k(k+1)}{2}} \indic{\hat{Y}_t \neq Y_t}\indic{w(\cH_{L_t},X_{[t+1,\frac{k(k+1)}{2}]})\leq \frac{1}{2}w(\cH_{L_{t-1}},X_{[t,\frac{k(k+1)}{2}]})}\right|X_{\leq \frac{k(k-1)}{2}}\right] \leq 2 d \log k.
    \end{equation}

    Then consider the second part, we have
    \begin{align*}
        &\E\left[\left.\sum_{t=\frac{k(k-1)}{2}}^{\frac{k(k+1)}{2}} \indic{\hat{Y}_t \neq Y_t}\indic{w(\cH_{L_t},X_{[t+1,\frac{k(k+1)}{2}]}) > \frac{1}{2}w(\cH_{L_{t-1}},X_{[t,\frac{k(k+1)}{2}]})}\right|X_{\leq \frac{k(k-1)}{2}}\right]\\
        &= \E\left[\E\left[\left.\left.\sum_{t=\frac{k(k-1)}{2}}^{\frac{k(k+1)}{2}} \indic{\hat{Y}_t \neq Y_t}\indic{w(\cH_{L_t},X_{[t+1,\frac{k(k+1)}{2}]}) > \frac{1}{2}w(\cH_{L_{t-1}},X_{[t,\frac{k(k+1)}{2}]})}\right|X_{\leq t}\right]\right|X_{\leq \frac{k(k-1)}{2}}\right]\\
        %&= \E\left[\E\left[\left.\sum_{t=1}^T \indic{\hat{Y}_t \neq Y_t}\indic{w(\cH_{L_t},X_{t<i\leq T}) > \frac{1}{2}w(\cH_{L_{t-1}},X_{t\leq i \leq T})}\right|X_{\leq t}\right]\right]\\
        &= \E\left[\left.\sum_{t=\frac{k(k-1)}{2}}^{\frac{k(k+1)}{2}} \indic{\hat{Y}_t \neq Y_t}\E\left[\left.\indic{w(\cH_{L_t},X_{[t+1,\frac{k(k+1)}{2}]}) > \frac{1}{2}w(\cH_{L_{t-1}},X_{[t,\frac{k(k+1)}{2}]})}\right|X_{\leq t}\right]\right|X_{\leq \frac{k(k-1)}{2}}\right]
        %&= \E\left[\sum_{t=1}^T \indic{\hat{Y}_t \neq Y_t}\E\left[\left.\indic{w(\cH_{L_t},X_{t<i\leq T}) > \frac{1}{2}w(\cH_{L_{t-1}},X_{t\leq i \leq T})}\right|X_{\leq t}\right]\right]
    \end{align*}
    This is because $\hat{Y}_t$ and $Y_t$ only depend on $X_{\leq t}$. Due to the equation~\ref{eq:pr}, we have
    \begin{align*}
        &\indic{\hat{Y}_t \neq Y_t}\E\left[\left.\indic{w(\cH_{L_t},X_{[t+1,\frac{k(k+1)}{2}]}) > \frac{1}{2}w(\cH_{L_{t-1}},X_{[t,\frac{k(k+1)}{2}]})}\right|X_{\leq t}\right] \\
        &= \indic{\hat{Y}_t \neq Y_t}\Pr\left[\left.w(\cH_{L_t},X_{[t+1,\frac{k(k+1)}{2}]}) > \frac{1}{2}w(\cH_{L_{t-1}},X_{[t,\frac{k(k+1)}{2}]})\right|X_{\leq t}\right]\\
        &\leq \frac{1}{2} \indic{\hat{Y}_t \neq Y_t}.
    \end{align*}
    Thus, 
    \begin{align}
    \label{eq:part2}
        &\E\left[\left.\sum_{t=\frac{k(k-1)}{2}}^{\frac{k(k+1)}{2}} \indic{\hat{Y}_t \neq Y_t}\indic{w(\cH_{L_t},X_{[t+1,\frac{k(k+1)}{2}]}) > \frac{1}{2}w(\cH_{L_{t-1}},X_{[t,\frac{k(k+1)}{2}]})}\right|X_{\leq \frac{k(k-1)}{2}}\right]\\ \notag
        &\leq \frac{1}{2} \E\left[\left.\sum_{t=\frac{k(k-1)}{2}}^{\frac{k(k+1)}{2}} \indic{\hat{Y}_t \neq Y_t}\right|X_{\leq \frac{k(k-1)}{2}}\right]
    \end{align}
    Combining these two inequalities (\ref{eq:part1} and \ref{eq:part2}), we have
    \begin{equation}
        \E\left[\left.\sum_{t=\frac{k(k-1)}{2}}^{\frac{k(k+1)}{2}} \indic{\hat{Y}_t \neq Y_t}\right|X_{\leq \frac{k(k-1)}{2}}\right] \leq 2d \log k + \frac{1}{2} \E\left[\left.\sum_{t=\frac{k(k-1)}{2}}^{\frac{k(k+1)}{2}} \indic{\hat{Y}_t \neq Y_t}\right|X_{\leq \frac{k(k-1)}{2}}\right].
    \end{equation}
    Thus, for any $k$, we have 
    \begin{equation}
    \label{eq:batch}
        \E\left[\left.\sum_{t=\frac{k(k-1)}{2}}^{\frac{k(k+1)}{2}} \indic{\hat{Y}_t \neq Y_t}\right|X_{\leq \frac{k(k-1)}{2}}\right] \leq  4d \log k.    
    \end{equation}
    According to the inequality~\ref{eq:batch} for every $k$, $\E\left[Z_k\left| X_{\leq \frac{k(k-1)}{2}}\right.\right] \leq 4d\log k$. Thus, with probability at least $1-\frac{1}{K^2}$,
    \begin{equation*}
        \sum_{k=1}^K Z_k \leq \sum_{k=1}^K 4d\log k + \sqrt{4K^3\log K} \leq 4dK\log K + \sqrt{4K^3\log K}.
    \end{equation*}
    By the definition of $Z_k$, with probability at least $1-\frac{1}{K^2}$, 
    \begin{equation}
        \sum_{t=1}^{\frac{K(K+1)}{2}} \indic{\hat{Y}_t \neq Y_t} \leq 4dK\log K + \sqrt{4K^3\log K} \leq (4d+2)\sqrt{K^3\log K}.
    \end{equation}
    Let $T_K = \frac{K(K+1)}{2}$ be the number of instances in the sequence, with probability at least $1-\frac{1}{K^2}$
    \begin{equation}
        \sum_{t=1}^{T_K} \indic{\hat{Y}_t \neq Y_t} \leq (4d+2)T_K^{\frac{3}{4}}\sqrt{\frac{1}{2}\log T_K}.
    \end{equation}
    Define the event $E_K$ as the event that in the sequence $X_{\leq T_K}$, $\sum_{t=1}^{T_K} \indic{\hat{Y}_t \neq Y_t} > (4d+2)T_K^{\frac{3}{4}}\sqrt{\frac{1}{2}\log T_K}$. 
    Then we know $\Pr[E_K] \leq \frac{1}{K^2}$. Notice the fact that for any $K \in \nats$, $\sum_{k=1}^K \frac{1}{k^2} \leq \frac{\pi^2}{6}$. 
    By Borel-Cantelli lemma, we know that for any $T_K = \frac{K(K+1)}{2}$ large enough, $\sum_{t=1}^{T_K} \indic{\hat{Y}_t \neq Y_t} \leq (4d+2)T_K^{\frac{3}{4}}\sqrt{\frac{1}{2}\log T_K}$ happens with probability $1$.

    Then for any large enough $T$, we have $T_K\leq T \leq T_{K+1} \leq 2 T$. Thus, with probability $1$, 
    \begin{align}
        \sum_{t=1}^{T} \indic{\hat{Y}_t \neq Y_t} &\leq (4d+2)T_{K+1}^{\frac{3}{4}}\sqrt{\frac{1}{2}\log T_{K+1}}\\
        & \leq (4d+2) (2 T)^{\frac{3}{4}}\sqrt{\frac{1}{2}\log 2T}.
    \end{align}
    Therefore, for any large enough $T$ and a universal constant $c$, with probability $1$, 
    \begin{equation}
        \sum_{t=1}^{T} \indic{\hat{Y}_t \neq Y_t} \leq c T^{\frac{3}{4}}\sqrt{\log T}.
    \end{equation}
    Notice that $c T^{\frac{3}{4}}\sqrt{\log T}$ is $o(T)$. 
    That finishes the proof.
\end{proof}
\subsection{Proof of Theorem \ref{thm:infinite_VCL}}
\begin{proof}
    In this proof, we first modify the chosen indifferent VCL tree, such that the number of elements in each node is increasing exponentially. 
    In other words, we hope the $k$-th node in the Breadth-First-Search (BFS) order contains $2^{k-1}$ elements. 
    We may reach this target by recursively modifying the chosen VCL tree.
    Starting from the root of the tree, for each node that does not satisfy our requirement, we promote one of its descendants to replace that node, such that the number of elements in that node is large enough.
    
    Then we define the data process as follows: 
    For the modified VCL tree, we define the sequence $\{X_{i}\}_{i\in \nats}$ as $X_{2^{k-1}+j} = X_{jk}$, where $X_{jk}$ is the $j$-th element in the $k$-th node in the BFS order.
    %Post all elements in each node of the VCL tree modified by the procedure mentioned above. 
    %Then list those nodes in the Breadth-First-Search order and for the elements in the same node, choose a random order to list them.

    Next, we define the target function, in other words, choose the label $Y_t$ for each $X_t$. 
    First, we take a random walk in the modified VCL tree. 
    Then for the elements in the node on the path we visited (in-branch node), we let $Y_t$ be the label given by the edge adjacent to that node. 
    Then, we need to decide the label of those elements in the node not on the path we visited (off-branch nodes). 
    For any off-branch node, we can pick an in-branch node after it in BFS order, as the tree is indifferent, all descendants of the in-branch node agree on the label of the elements in the off-branch node.
    Thus, we can let the label of the elements in the off-branch node be the label decided by the descendant of that in-branch node.
    %As it is an infinite VCL tree, the path is infinite and thus traverses all elements that appear in the process defined above. Then because the tree is indifferent, all consistent functions at each point will agree on the value of all nodes that come before this node in Breadth-First-Search order. 
    So, every element in the node that is visited by the random walk still may be wrong with probability $\frac{1}{2}$, when the algorithm sees it the first time. 
    Also, the number of elements that come before $k$-th node in the BFS order, $\sum_{i=0}^{k-2} 2^i = 2^{k-1} -1$, is roughly the same as the number of elements in the $k$-th node, $2^{k-1}$. 
    Thus at the $d$-th layer of the modified tree, if the random walk reaches the node $K_d$, for that node, we have
    \begin{equation}
    \label{eq:a-quarter}
         \E\left[\left.\frac{1}{n_{K_d}} \sum_{t=1}^{n_{K_d}} \indic{h_{t-1}(X_t) \neq Y_t}\right|K_d\right]\geq \frac{1}{4}.
    \end{equation}
    Here $n_{K_d}$ is the number of elements in the process when we reach the $K_d$-th node. This inequality holds for all $d$.
    %Notice that for every node in the VCL tree, there will be an $n$-length sequence of $x_i$, with $2^n$ branches. Then if we uniform randomly pick each value, we will have $\E\left[\frac{1}{n} \sum_{i=1}^n \indic{h_{i-1}(X_i) \neq Y_i}\right] = \frac{1}{2}$.

    %Then according to the theorem~\ref{thm:indifferent}, we can assume our infinite VCL tree is indifferent, which means all the consistent functions agree on the nodes that come earlier than this node in the breadth-first-search traverse algorithm. Then we can build the sequence by picking nodes in this VCL tree. For each $i$, we will pick a node in the $2^i$-th layer. Then we can make sure for every $n_i = 2^i-1$, the latest node, that contains $2^{i-1}$ elements, has a random walk in this node and makes $2^{i-2}$ mistakes in expectation. So we will have $\frac{n_i}{4}$ mistakes in expectation for those $n_i$ elements in the sequence.

    Then notice that by taking an expectation on the expected mistakes for every deterministic sequence, we get an expectation of the number of mistakes for this random process. Then we can pick the sub-sequence, which only contains $n_{K_d} = 2^{K_d}-1$ elements, and this decreases the ratio of mistakes (the third line in the following computations). This is because we can only make mistakes when the elements are in the $K_d$-th node and any other $n$ will have a smaller ratio of mistakes than $n_{K_d}$. Then notice that the ratio of mistakes is always smaller than or equal to $1$, we can use the reversed Fatou's lemma and inequality~\ref{eq:a-quarter} to get the final result (the fourth line and the last line in the following computations). 

    \begin{align*}
        &\E\left[\E\left[\left.\limsup_{n\rightarrow \infty} \frac{1}{n} \sum_{t=1}^n \indic{h_{t-1}(X_t) \neq Y_t} \right|(\DataX,\DataY)\right]\right]\\
        &= \E\left[\limsup_{n\rightarrow \infty} \frac{1}{n} \sum_{t=1}^n \indic{h_{t-1}(X_t) \neq Y_t} \right]\\
        &\geq \E\left[\limsup_{d\rightarrow \infty} \frac{1}{n_{K_d}} \sum_{t=1}^{n_{K_d}} \indic{h_{t-1}(X_t) \neq Y_t} \right]\\
        &\geq \E\left[\limsup_{d\rightarrow \infty} \E\left[\left.\frac{1}{n_{K_d}} \sum_{t=1}^{n_{K_d}} \indic{h_{t-1}(X_t) \neq Y_t}\right|n_{K_d}\right] \right]\\
        &\geq \frac{1}{4}.
    \end{align*}

    Thus, there exists a deterministic sequence $(\DataX,\DataY)$ such that it does not make sublinear expected mistakes.
\end{proof}
\subsection{Proof of Theorem~\ref{thm:Opt-OL}}
\begin{proof}
    As $\cH$ has an infinite Littlestone tree, we can take a random walk on this tree, then for every step $t$, take the label of the node as $X_t$, and no matter what the learning algorithm predicts, uniformly randomly choose $Y_t$. 
    Thus, we have $\E\left[\indic{h_{t-1}(X_t)\neq Y_t}\right] \geq \frac{1}{2}$ for every $t$. We get 
    \begin{equation}
        \limsup_{n\rightarrow\infty} \E_{(\DataX,\DataY)}\left[\frac{1}{n} \sum_{t=1}^{n}\indic{h_{t-1}(X_t)\neq Y_t} \right] \geq \frac{1}{2}.
    \end{equation}
    According to Fatou's lemma, notice the ratio of mistakes is smaller than or equal to $1$, so we have the following inequality,
    \begin{equation}
    \label{eq:littlestone}
        \E\left[\limsup_{n\rightarrow\infty} \frac{1}{n} \sum_{t=1}^{n}\indic{h_{t-1}(X_t)\neq Y_t}\right] \geq \frac{1}{2}.
    \end{equation}
    Thus, for each learning algorithm, there exists a data sequence $\{(X_t,Y_t)\}_{t\in \nats}$ such that equation~\ref{eq:littlestone} holds.
    %\begin{equation*}
    %    \limsup_{n\rightarrow\infty} \E\left[\frac{1}{n} \sum_{t=1}^{n}\indic{h_{t-1}(X_t)\neq Y_t}\right] \geq \frac{1}{2}
    %\end{equation*}
    That finishes the proof.
    %\begin{align}
        %&\E_{(\DataX,\DataY)}\left[\left.\E\left[\limsup_{n\rightarrow\infty} \frac{1}{n} \sum_{t=1}^{n}\indic{h_{t-1}(X_t)\neq Y_t}\right]\right|(\DataX,\DataY) = \{(X_t,Y_t)\}_t\right]\nonumber\\
        %&\geq \limsup_{n\rightarrow\infty} \E_{(\DataX,\DataY)}\left[\frac{1}{n} \sum_{t=1}^{n}\indic{h_{t-1}(X_t)\neq Y_t} \right]
    %\end{align}
    %Then we can assert that for any learning algorithm, there are some non-random sequences $\{(X_t,Y_t)\}_t$, such that
\end{proof}
\subsection{Proof of Lemma~\ref{lem:sufficient}}
For the completeness, we provide the weighted majority algorithm here:
\begin{algorithm}
\caption{The Weighted Majority Algorithm with Non-uniform Initial Weights}
\label{alg:WM}
    %\begin{algorithmic}
    For expert $e_i$, assign $w_i^0 = \frac{1}{i(i+1)}$ as its initial weight.\\
    \For{t = 1,2,\dots}
        {Predict $y_t = \indic{\sum_{i} \frac{w_i^{t-1}}{\sum_{i} w_i^{t-1}} e_i(X_t) \geq \frac{1}{2}}$.\\
        Update $w_i^t = \left(\frac{1}{2}\right)^{\indic{e_i(X_t)\neq Y_t}} w_i^{t-1}$.}
    %\end{algorithmic}
\end{algorithm}

By using this algorithm, we provide the proof of the lemma~\ref{lem:sufficient}.
\begin{proof}
    In order to prove this lemma, we use the weighted majority algorithm~\ref{alg:WM} with initial weight $w_{i}^0 = \frac{1}{i(i+1)}$ for each expert $i$. We set $\text{MB} = \sum_{t=1}^{n} \indic{h_{t-1}(X_t)\neq Y_t}$, which is the number of mistakes the algorithm made during $n$ rounds. We also set $m_i = \sum_{t=1}^{n} \indic{e_i(X_t)\neq Y_t}$. Next, compute the total weight of all experts after $n$ rounds of the algorithm, $W^n$. Notice that if the algorithm makes a mistake at round $n$, there must be a majority of the experts making a mistake at round $n$, so $W^{n-1} - W^n \geq \frac{1}{2} \cdot \frac{1}{2} W^{n-1}$, which means $W^n \leq \frac{3}{4} W^{n-1}$. Thus, we have $W^n \leq \left(\frac{3}{4}\right)^{\text{MB}} W^0 \leq \left(\frac{3}{4}\right)^{\text{MB}}$.
    Notice that $W^n \geq w_i^n$ for all $i$, so it holds for $\argmin_{i\leq i_n} \sum_{t=1}^n \indic{e_i(X_t) \neq Y_t}$. We have the following inequality for all $n$, 
    \begin{align}
        \sum_{t=1}^{n} \indic{h_{t-1}(X_t)\neq Y_t} &\leq 3 \min_{i\leq i_n} \sum_{t=1}^n \indic{e_i(X_t) \neq Y_t} + \log \left(\frac{1}{w_{i}^0}\right)\\
        &\leq 3 \min_{i\leq i_n} \sum_{t=1}^n \indic{e_i(X_t) \neq Y_t} + 2\log i_n. 
    \end{align}
    Here $3$ comes from the fact that $\frac{\log 2}{\log(\frac{4}{3})} \leq 3$. Therefore, for a fixed process $\DataX$, target function $h^*$ and a fixed sequence, $\{i_n\}$, we have
    \begin{equation}
        \E\left[ \limsup_{n\rightarrow \infty} \frac{1}{n}\sum_{t=1}^{n} \indic{h_{t-1}(X_t)\neq Y_t} \right] \leq \E \left[\limsup_{n\rightarrow \infty} \frac{1}{n} \left(3 \min_{i\leq i_n} \sum_{t=1}^n \indic{e_i(X_t) \neq Y_t} + 2\log i_n\right)\right].
    \end{equation}
    Then by the condition~\ref{cond:1}, we know the right-hand side of the inequality is $0$.

    Notice that $\limsup_{n\rightarrow \infty} \frac{1}{n}\sum_{t=1}^{n} \indic{h_{t-1}(X_t)\neq Y_t}$ is a non-negative random variable, so we have
    \begin{equation}
        \E\left[ \limsup_{n\rightarrow \infty} \frac{1}{n}\sum_{t=1}^{n} \indic{h_{t-1}(X_t)\neq Y_t} \right] = 0.
    \end{equation}
    Therefore, $\limsup_{n\rightarrow \infty} \frac{1}{n}\sum_{t=1}^{n} \indic{h_{t-1}(X_t)\neq Y_t} = 0$ almost surely.
\end{proof}
\subsection{Proof of Lemma~\ref{lem:necessary}}
\label{pf:necessary}
\noindent\begin{minipage}{\textwidth}
\renewcommand\footnoterule{} 
\begin{algorithm}[H]
\label{alg:expert}
\caption{Expert $J$}
\KwIn{set $J$}
\For{ t = 1,2,\dots}
{recieve $X_t$.\\
compute $\tilde{y_t} = f_t^{\Alg}(X_{<t},\hat{Y}_{<t},X_t)$.\footnote{$\hat{Y}_{<t} = \{\hat{y_i}\}_{<t}$ }\\
\uIf{$t \in J$}{predict $\hat{y_t} = \neg \tilde{y_t}$}
\Else{predict $\hat{y_t} = \tilde{y_t}$}
}
\end{algorithm}
\end{minipage}
\begin{proof}
{\vskip -2mm}
    In this proof, we show how to define a sequence of experts $\{e_1,e_2,\dots\}$, such that the condition~\ref{cond:1} is satisfied, if $\DataX$ admits universal online learning. 
    Let an online learning rule $f_t^{\Alg}$ be the algorithm that can learn all realizable label processes $\DataY \in \tR(\cH,\DataX)$. 
    Then we can build the experts by algorithm~\ref{alg:expert} and represent the experts by the set of the index of mistake rounds. 
    We can define this set as $J$. 
    For example, if the algorithm makes mistakes at round $1,4,7$ then the set $J$ for that expert is $\{1,4,7\}$. Thus we have a one-on-one map from $J$ to an expert.

    First, we show that for every realizable label process $\DataY \in \tR(\cH,\DataX)$ and any $n \in \nats$, there is an expert $e_i$ such that $e_i(X_t) = Y_t$ for all $t \leq n$. 
    This part of the proof is similar to the proof of lemma 12 in the work of ~\citet{BPS09}. 
    Consider a subsequence $Y_{\leq n}$ of a realizable label process $\DataY$, $j \in J$ if and only if $f_j^{\Alg}(X_{<j},\hat{Y}_{<j},X_j) \neq Y_j$ for all $j \leq n$. 
    Thus, the history $(X_{<t},\hat{Y}_{<t})$ for all $t\leq n$ are the same as $(X_{<t},Y_{<t})$, which implies $e_i(X_t) = \hat{Y}_t = Y_t$ for all $t\leq n$.
    Therefore, for any $n\in \nats$, there is a set $J_{i_n}$ containing all $j\leq n$, such that $f_j^{\Alg}(X_{<j},\hat{Y}_{<j},X_j) \neq Y_j$.
    Then the algorithm~\ref{alg:expert} with input $J_{i_n}$ creates an expert $e_{i_n}$, such that $e_{i_n}(X_t) = Y_t$ for all $t\leq n$. 
    %It is obvious that the algorithm~\ref{alg:expert} also generates an expert predicts $Y_t$ each round for all processes $(\DataX,\DataY)$ that $\Alg$ can learn with $o(n)$ mistakes.
    
    Next, we only need to build the index $i_n$ for the set $J_n$ to show that $\log i_n = o(n)$.
    %Meanwhile, the existence of the indexing of the experts shows that the set of experts is countable. 
    The index of set $J$ is as follows: For all $J \subseteq \nats$, order them by $|J|(\max J)$. 
    (If two sets have the same value, we use $|J|$ as a tie-breaking.) 
    Here $|J|$ is the number of elements in $J$ and $\max J$ is the maximal element in $J$. 
    After that, index $J$'s from $1$ following this order.

    At last, we show the method mentioned above constructed a set of experts satisfying condition~\ref{cond:1}. 
    we have a set of experts $E = \{e_1,e_2,\dots\}$, there is a sequence $\{i_n\}$ with $\log i_n = o(n)$, such that for any realizable sequence $(\DataX,\DataY)$, for any $n\in \nats$, there is an expert $e_i$ with $i \leq i_n$ such that $Y_t = e_i(X_t)$ for every $t \leq n$.
    Therefore, we need to compute the $i_n$ as follows. 
    Assume $|J_{i_n}|\max J_{i_n} = k$, we have
    \begin{align*}
        i_n &\leq |\{J: |J|\max J \leq k\}| = 1+\sum_{m=1}^k |\{J:|J|\leq \frac{k}{m}, \max J = m\}| \\
        &= 1+\sum_{m=1}^{\sqrt{k}} 2^{m-1} + \sum_{m=\sqrt{k}}^{k} \binom{m-1}{\leq (\frac{k}{m}-1)} \leq 2^{\sqrt{k}} + \sum_{m=\sqrt{k}}^{k} \left(\frac{em^2}{k}\right)^{\frac{k}{m}} \leq (k+1) e^{\sqrt{k}}.
    \end{align*}
    Notice that $k = |J_{i_n}|n$, we have
    \begin{equation}
        \lim_{n\rightarrow\infty}\frac{1}{n}\log i_n \leq \lim_{n\rightarrow\infty}\frac{2\sqrt{|J_{i_n}|n}}{n} = 0.
    \end{equation}
    Thus, we get the set of experts and the corresponding sequence $\{i_n\}$ satisfying condition~\ref{cond:1}.
\end{proof}
\subsection{Proof of Theorem~\ref{thm:agnostic}}
\begin{proof}
    In order to prove this theorem, we use the procedure based on learning with experts' advice. 
    First, we build and index the experts, $\{e_1,e_2,\cdots\}$, by using the same method we mentioned in the proof of lemma~\ref{lem:necessary} (\ref{pf:necessary}), which satisfies Condition~\ref{cond:1}.
    %we can build and index the experts based on the learning algorithm for the realizable case. 
    %And this satisfies Condition~\ref{cond:1}.
    Then, to use \emph{Squint} algorithm from the work of \citet{KE15}, we need the initial weight of the experts to be a distribution. 
    We can set the initial weights $\pi_i = \frac{1}{i(i+1)}$ for each $e_i$ and this forms a distribution, as $\pi_i = \frac{1}{i}-\frac{1}{i+1}$, the sum of $\pi_i$ reaches $1$ when $i$ goes to infinity.
    
    According to Theorem 3 in the work of~\citet{KE15}, we have the following upper bound for the regret
    \begin{equation}
        \sum_{t=1}^T \indic{\hat{h}_{t-1}(X_t)\neq Y_t}- \sum_{t=1}^T \indic{e_k(X_t)\neq Y_t} \leq O\left(\sqrt{V_k\log \frac{\log V_k}{\pi_k}+\log \frac{1}{\pi_k}}\right).
    \end{equation}
    Here the $V_k$ is the sum of the square of the difference between the algorithm's mistake and expert $k$'s mistake in each round. In other words, we have
    \begin{equation}
        V_k = \sum_{i = 1}^T \left(\indic{h_{t-1}(X_t) \neq Y_t} - \indic{e_k(X_t) \neq Y_t}\right)^2.
    \end{equation}
    Notice that $\left(\indic{h_{t-1}(X_t) \neq Y_t} - \indic{e_k(X_t) \neq Y_t}\right)^2$ is either $1$ or $0$, we have $V_k \leq T$. The regret of this algorithm is upper bounded by:
    \begin{equation}
        O\left(\sqrt{V_k\log \frac{\log V_k}{\pi_k}+\log \frac{1}{\pi_k}}\right) = O\left(\sqrt{T\log \log T + T \log k +\log k}\right).
    \end{equation}
    According to the condition~\ref{cond:1}, we also know 
    \begin{equation}
        \E\left[ \limsup_{T\rightarrow \infty} \min_{e_i:i\leq i_T} \frac{1}{T} \sum_{t=1}^T \indic{e_i(X_t) \neq Y^*_t} \right] = 0.
    \end{equation}
    Thus, the regret of this algorithm is
    \begin{align*}
        &\limsup_{T\rightarrow \infty}\frac{1}{T} \sum_{t = 1}^T \left(\indic{Y_t \neq \hat{h}_{t-1}(X_t)} - \indic{Y_t \neq Y^*_t}\right)\\
        &= \limsup_{T\rightarrow \infty}\frac{1}{T} \sum_{t = 1}^T \left(\indic{Y_t \neq \hat{h}_{t-1}(X_t)} - \left|\indic{Y_t \neq e_k(X_t)}- \indic{e_k(X_t)\neq Y^*_t}\right|\right)\\
        &\leq \limsup_{T\rightarrow \infty}\frac{1}{T} \sum_{t = 1}^T \left(\indic{Y_t \neq \hat{h}_{t-1}(X_t)} - \indic{Y_t \neq e_k(X_t)}+ \indic{e_k(X_t)\neq Y^*_t}\right)\\
        &\leq \limsup_{T\rightarrow \infty}\frac{1}{T} \sum_{t = 1}^T \left(\indic{Y_t \neq \hat{h}_{t-1}(X_t)} - \indic{Y_t \neq e_k(X_t)}\right)+ \limsup_{T\rightarrow \infty}\frac{1}{T} \sum_{t = 1}^T \indic{e_k(X_t)\neq Y^*_t}
    \end{align*}
    Because $\log i_T = o(T)$ and $\log k < \log i_T$ we have $\log k = o(T)$.
    Thus, the regret above is $o(T)$. 
    Therefore, we have an algorithm to extend a universally consistent online learning algorithm for realizable cases to a universally consistent online algorithm for agnostic cases.

    To prove the necessity, notice that a universally consistent online algorithm for agnostic cases can be used to solve the realizable case and the regret is equal to the number of mistakes.
\end{proof}
%\newpage
%\input{Neurips_Checklist}
\bibliographystyle{plainnat}
\end{document}